\newtheorem{theorem}{Theorem}
\newtheorem{lemma}{Lemma}
\newtheorem{definition}{Definition}
\newcommand{\PreserveBackslash}[1]{\let\temp=\\#1\let\\=\temp}
\newcolumntype{C}[1]{>{\PreserveBackslash\centering}p{#1}}
\newcolumntype{R}[1]{>{\PreserveBackslash\raggedleft}p{#1}}
\newcolumntype{L}[1]{>{\PreserveBackslash\raggedright}p{#1}}
\newcommand*{\defeq}{\stackrel{\text{def}}{=}}
\newcommand{\bbE}{\mathbb{E}} 
\newcommand{\bbP}{\mathbb{P}} 
\newcommand{\bbQ}{\mathbb{Q}} 
\newcommand{\bbR}{\mathbb{R}} 
\newcommand{\bbW}{\mathbb{W}}
\newcommand{\cN}{\mathcal{N}}
\newcommand{\cX}{\mathcal{X}} 
\newcommand{\cY}{\mathcal{Y}}
\newcommand{\cP}{\mathcal{P}}
\newcommand{\cPac}{\cP_{\text{ac}}}
\newcommand{\cC}{\mathcal{C}}
\newcommand{\cR}{\mathcal{R}}
\newcommand{\cZ}{\mathcal{Z}}
\newcommand{\cG}{\mathcal{G}}
\newcommand{\cW}{\mathcal{W}}
\newcommand{\cF}{\mathcal{F}}
\newcommand{\KL}[2]{\text{KL}\left(#1\Vert #2\right)}
\newcommand{\OT}{\text{OT}}
\newcommand{\EOT}{\text{EOT}}
\newcommand\footnoteref[1]{\protected@xdef\@thefnmark{\ref{#1}}\@footnotemark}
\newcommand*{\rom}[1]{\expandafter\@slowromancap\romannumeral #1@}
\newenvironment{myenvironment}{}{}
\DeclareMathOperator*{\argmin}{arg\,min}
\DeclareMathOperator*{\argmax}{arg\,max}
\NewDocumentCommand{\mref}{m}{\quinn_mref:n {#1}}
\title{Energy-guided Entropic \\ Neural Optimal Transport}
\author{%
	Petr Mokrov$^{1}$,  Alexander Korotin$^{1,2}$, Alexander Kolesov$^{1}$,\\  \textbf{Nikita Gushchin$^{1}$, Evgeny Burnaev$^{1,2}$}
	\\
	$^{1}$Skolkovo Institute of Science and Technology, \textit{Moscow, Russia}\\
	$^{2}$Artificial Intelligence Research Institute, \textit{Moscow, Russia}\\
	\texttt{\{petr.mokrov,a.korotin\}@skoltech.ru} \\
}
\begin{document}

\maketitle
\vspace{-2mm}
\begin{abstract}
  Energy-based models (EBMs) are known in the Machine Learning community for decades. Since the seminal works devoted to EBMs dating back to the noughties, there have been a lot of efficient methods which solve the generative modelling problem by means of energy potentials (unnormalized likelihood functions). In contrast, the realm of Optimal Transport (OT) and, in particular, neural OT solvers is much less explored and limited by few recent works (excluding WGAN-based approaches which utilize OT as a loss function and do not model OT maps themselves). In our work, we bridge the gap between EBMs and Entropy-regularized OT. We present a novel methodology which allows utilizing the recent developments and technical improvements of the former in order to enrich the latter. From the theoretical perspective, we prove generalization bounds for our technique. In practice, we validate its applicability in toy 2D and image domains. To showcase the scalability, we empower our method with a pre-trained StyleGAN and apply it to high-res AFHQ $512\times512$ unpaired I2I translation. For simplicity, we choose simple short- and long-run EBMs as a backbone of our Energy-guided Entropic OT approach, leaving the application of more sophisticated EBMs for future research. Our code is available at: \url{https://github.com/PetrMokrov/Energy-guided-Entropic-OT}
\end{abstract}
\vspace{-5mm}
\begin{figure}[!h]
\hspace{0mm}\begin{subfigure}[b]{0.195\linewidth}
\centering
\includegraphics[width=1.0\linewidth]{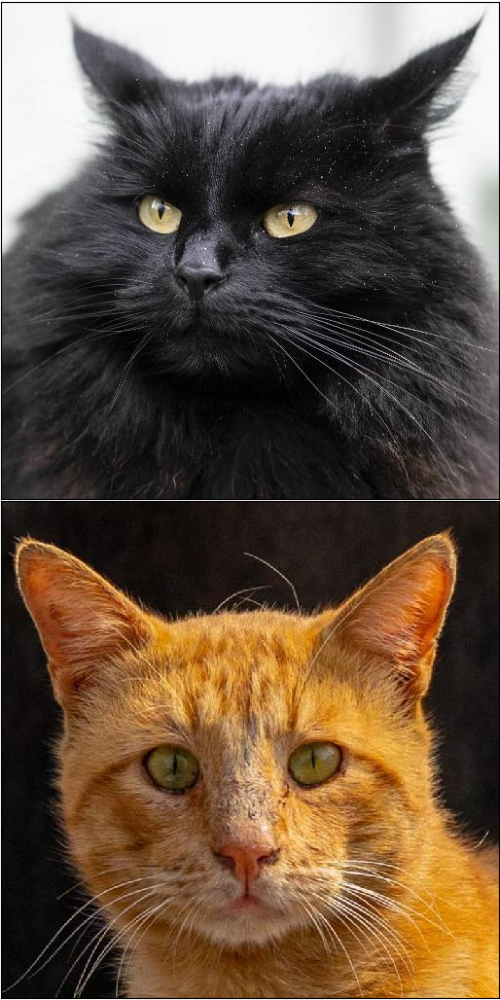}
\vspace{0.51mm}
\label{fig:teaser:source}
\end{subfigure}
\hspace{2mm}\begin{subfigure}[b]{0.78\linewidth}
\centering
\includegraphics[width=1.0\linewidth]{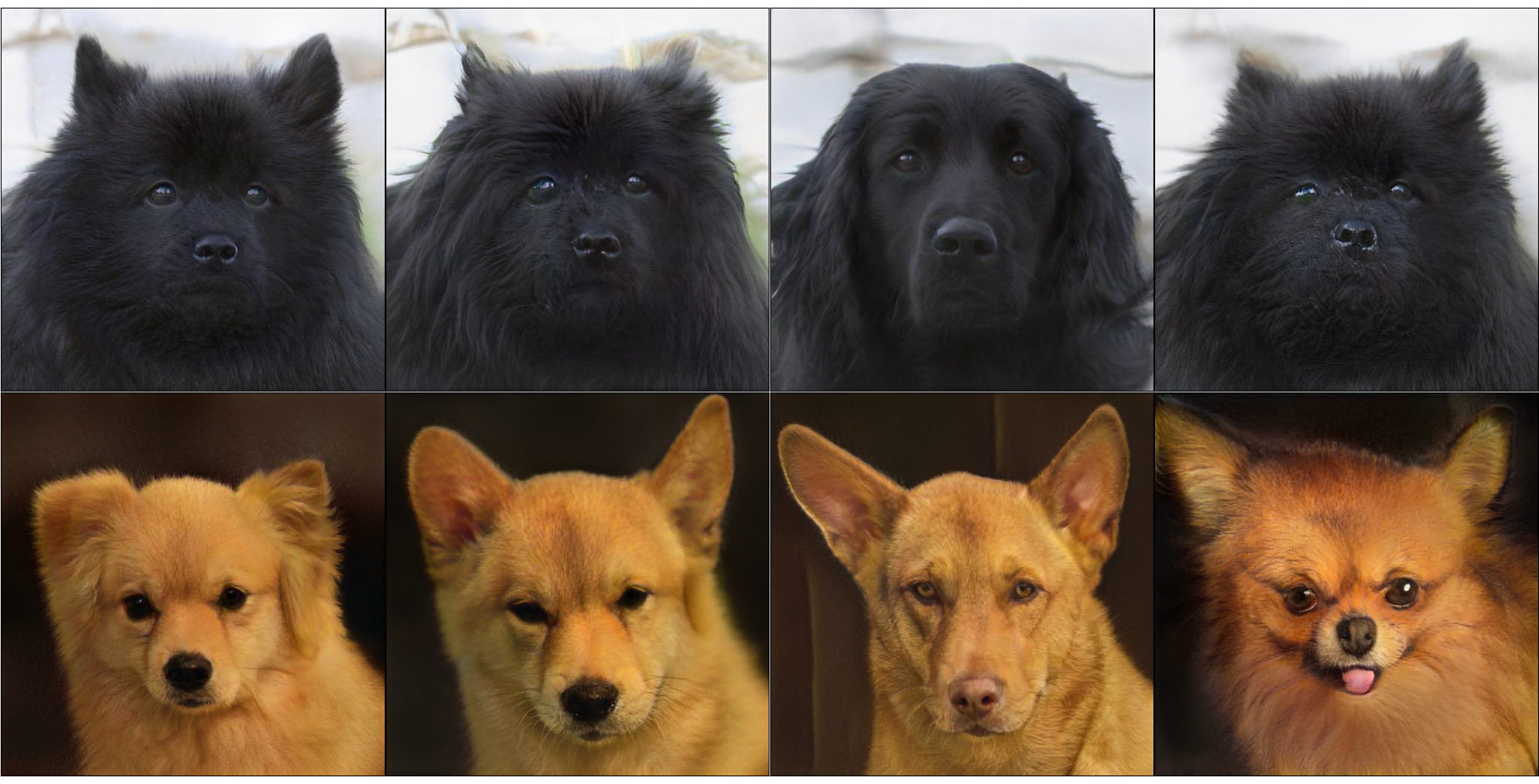}
\vspace{0mm}
\label{fig:teaser:generated}
\end{subfigure}
\vspace{-5mm}\caption{\centering AFHQ $512\times512$ \textit{Cat}$\rightarrow$\textit{Dog} unpaired translation by our Energy-guided EOT solver applied in the latent space of StyleGAN2-ADA. \textit{Our approach \underline{does not need} data2latent encoding.} \textit{Left:} source samples; \textit{right:} translated samples. }
\label{fig:teaser}
\vspace{-3mm}
\end{figure}

\vspace{-2mm}\section{Introduction}\vspace{-2mm}

The computational Optimal Transport (OT) field is an emergent and fruitful area in the Machine Learning research which finds its applications in generative modelling \citep{arjovsky2017wasserstein, gulrajani2017improved, deshpande2018generative}, domain adaptation \citep{nguyen2021most, shen2018wasserstein, wang2022wasserstein}, unpaired image-to-image translation \citep{xie2019scalable, hu9improved}, datasets manipulation \citep{alvarez2020geometric}, population dynamics \citep{ma2021learning, wang2018learning}, gradient flows modelling \citep{alvarez-melis2022optimizing, mokrov2021large}, barycenter estimation \citep{korotin2022wasserstein, pmlr-v139-fan21d}. The majority of the applications listed above utilize OT as a loss function, e.g., have WGAN-like objectives which compare the generated (fake) and true data distributions. However, for some practical use cases, e.g., unpaired image-to-image translation \citep{korotin2023neural}, it is worth modelling the OT maps or plans by themselves.
\par The existing approaches which recover OT plans are based on various theoretically-advised techniques. Some of them \citep{makkuva2020optimal, korotin2021wasserstein} utilize the specific form of the cost function, e.g., squared Euclidean distance. The others \citep{xie2019scalable, lu2020large} modify GAN objectives with additional OT regularizer, which results in biased OT solvers \citep[Thm. 1]{gazdieva2022optimal}.
The works \citep{fan2023neural, korotin2023neural, rout2022generative} take advantage of dual OT problem formulation. They are capable of tackling \textit{unbiased} large-scale continuous OT with general cost functions but may yield \textit{fake} solutions \citep{korotin2023kernel}.
To overcome this issue, \citep{korotin2023kernel} propose to use strictly convex regularizers which guarantee the uniqueness of the recovered OT plans. And one popular choice which has been extensively studied both in discrete \citep{cuturi2013sinkhorn} and continuous \citep{genevay2016stochastic, clason2021entropic} settings is the \textbf{Entropy}. The well-studied methodological choices for modelling Entropy-regularized OT (EOT) include (a) stochastic dual maximization approach which prescribes alternating optimization of dual potentials \citep{seguy2018large, daniels2021score} and (b) dynamic setup having connection to Schr\"odinger bridge problem \citep{bortoli2021diffusion, gushchin2023entropic, chen2022likelihood}. In contrast to the methods presented in the literature, we come up with an approach for solving EOT built upon EBMs. 
\par \textbf{Contributions}. We propose a novel energy-based view on the EOT problem.

\begin{enumerate}[leftmargin=5mm]\vspace{-2mm}
    \item We take advantage of weak dual formulation for the EOT problem and distinguish the EBM-related nature of dual potential which originates due to this formulation (\S \ref{subsec:method:weak_eot_reformulation}).
    \item We propose theoretically-grounded yet easy-to-implement modifications to the standard EBMs training procedure which makes them capable of recovering the EOT plans (\S \ref{subsec:opt_procedure}). 
    \item We establish generalization bounds for the EOT plans learned via our proposed method (\S \ref{subsec:method:slt_bound}).
    \item We showcase our algorithm's performance on low- and moderate-dimensional toy setups and large-scale  $512\!\times\!512$ images transfer tasks solved with help of a pre-trained StyleGAN
    (\S \ref{sec:experiments}).
\end{enumerate}\vspace{-2mm}

\par \textbf{Notations}. Throughout the paper, $\cX$ and $\cY$ are compact subsets of the Euclidean space, i.e., $\cX \subset \bbR^{D_x}$ and $\cY \subset \bbR^{D_y}$.
The continuous functions on $\cX$ are denoted as $\cC(\cX)$. In turn, $\cP(\cX)$ are the sets of Borel probability distributions on $\cX$. Given distributions $\bbP \in \cP(\cX)$ and $\bbQ \in \cP(\cY)$, $\Pi(\bbP, \bbQ)$ designates the set of \textit{couplings} between the distributions $\bbP$ and $\bbQ$, i.e., probability distributions on product space $\cX \times \cY$ with the first and second marginals given by $\bbP$ and $\bbQ$, respectively. We use $\Pi(\bbP)$ to denote the set of probability distributions on $\cX \times \cY$ with the first marginal given by $\bbP$. The absolutely continuous probability distributions on $\cX$ are $\cPac(\cX)$. For $\bbP \in \cPac(\cX)$ we use $\dv{\bbP(x)}{x}$ and $\dv{\bbQ(y)}{y}$ to denote the corresponding probability density functions. Given distributions $\mu$ and $\rho$ defined on a set $\cZ$, $\mu \ll \rho$ means that $\mu$ is absolutely continuous with respect to $\rho$.

\vspace{-3mm}\section{Background}\vspace{-2mm}

\subsection{Optimal Transport}\vspace{-1mm}
The generic theory behind OT could be found in \citep{villani2009optimal, santambrogio2015optimal}.  For the specific details on EOT, see \citep{genevay2016stochastic,genevay2019entropy}. 
\par Let $\bbP \in \cP(\cX)$ and $\bbQ \in \cP(\cY)$. The primal OT problem due to Kantorovich \citep{villani2009optimal} is:
\begin{gather}
    \OT_c(\bbP, \bbQ) \defeq \inf\limits_{\pi \in \Pi(\bbP, \bbQ)} \int_{\cX \times \cY} c(x, y) \dd \pi(x, y).\label{OT_primal}
\end{gather}
In the equation above, $c: \cX \times \cY \rightarrow \bbR$ is a continuous \textit{cost} function which reflects a practitioner's knowledge of how data from the source and target distribution should be aligned. Typically, the cost function $c(x, y)$ is chosen to be Euclidean norm $\Vert x - y \Vert_2$ yielding the 1-Wasserstain distance ($\bbW_1$) or halved squared Euclidean norm $\frac{1}{2}\Vert x - y \Vert_2^2$ yielding the square of 2-Wasserstein distance ($\bbW_2^2$).
\par The distributions $\pi^* \in \Pi(\bbP, \bbQ)$ which minimize objective \eqref{OT_primal} are called the \textit{Optimal Transport plans}. Problem \eqref{OT_primal} may have several OT plans \citep[Remark 2.3]{peyre2019computational} and in order to impose the uniqueness and obtain a more tractable optimization problem, a common trick is to regularize \eqref{OT_primal} with strictly convex (w.r.t. distribution $\pi$) functionals $\cR: \cP(\cX \times \cY) \rightarrow \bbR$.

\par\textbf{Entropy-regularized Optimal Transport}. In our work, we utilize the popular Entropic regularization \citep{cuturi2013sinkhorn} which has found its applications in various works \citep{solomon2015convolutional, schiebinger2019optimal, rukhaia2021freeform}. This is mainly because of amenable sample complexity \citep[\S 3]{genevay2019entropy} and tractable dual representation of the Entropy-regularized OT problem which can be leveraged by, e.g., Sinkhorn's algorithm \citep{cuturi2013sinkhorn, vargas2021solving}. Besides, the EOT objective is known to be strictly convex \citep{genevay2016stochastic} thanks to the \textit{strict} convexity of Entropy $H$ and KL divergence \citep{santambrogio2015optimal, nutz2021introduction, nishiyama2020convex} appearing in EOT formulations.
\par Let $\varepsilon > 0$. The EOT problem can be formulated in the following ways:
\begin{numcases}{
\begin{cases}
         \EOT_{c, \varepsilon}^{(1)}(\bbP, \bbQ) \\
         \EOT_{c, \varepsilon}^{(2)}(\bbP, \bbQ) \\
         \EOT_{c, \varepsilon}(\bbP, \bbQ)
    \end{cases} \defeq \min\limits_{\pi \in \Pi(\bbP, \bbQ)} \int_{\cX \times \cY} c(x, y) \dd \pi(x, y) + 
}
    + \varepsilon \KL{\pi}{\bbP \times \bbQ}, \label{EOT_primal_1}\\
    - \varepsilon H(\pi), \label{EOT_primal_2} \\
    - \varepsilon \resizebox{.03\hsize}{!}{$\int_{\cX}$} H(\pi(\cdot \vert x)) \dd \bbP(x). \label{EOT_primal}
\end{numcases}

These formulations are equivalent when $\bbP$ and $\bbQ$ are absolutely continuous w.r.t. the corresponding standard Lebesgue measures since $\KL{\pi}{\bbP \times \bbQ} = - \int_{\cX} H(\pi(\cdot|x)) \dd \bbP(x) + H(\bbQ) = - H(\pi) + H(\bbQ) + H(\bbP)$. In other words, the equations \eqref{EOT_primal_1}, \eqref{EOT_primal_2} and \eqref{EOT_primal} are the same up to additive constants.

In the remaining paper, we will primarily work with the EOT formulation \eqref{EOT_primal}, and, henceforth, we will additionally assume $\bbP \in \cPac(\cX)$, $\bbQ \in \cPac(\cY)$ when necessary.

Let $\pi^* \in \Pi(\bbP, \bbQ)$ be the solution of EOT problem. The measure disintegration theorem yields:
\begin{gather*}
    \dd \pi^*(x, y) = \dd \pi^*(y \vert x) \dd \pi^*(x) = \dd \pi^*(y \vert x) \dd \bbP(x).
\end{gather*}
Distributions $\pi^*(\cdot \vert x)$ will play an important role in our analysis. In fact, they constitute the only ingredient needed to (stochastically) transform a source point $x \in \cX$ to target samples $y_1, y_2, \dots \in \cY$ w.r.t. EOT plan. We say that distributions $\{\pi^*(\cdot \vert x)\}_{x \in \cX}$ are the \textit{optimal conditional plans}.

\par\textbf{EOT problem as a weak OT (WOT) problem}. EOT problem \eqref{EOT_primal} can be understood as the so-called \textit{weak} OT problem \citep{gozlan2017kantorovich, backhoff2019existence}. Given a \textit{weak} transport cost $C: \cX \times \cP(\cY) \rightarrow \bbR$ which penalizes the displacement of a point $x \in \cX$ into a distribution $\pi(\cdot \vert x) \in \cP(\cY)$, the \text{weak} OT problem is given by
\begin{gather}
    \text{WOT}_C(\bbP, \bbQ) \defeq \inf\limits_{\pi \in \Pi(\bbP, \bbQ)} \int_{\cX}C(x, \pi(\cdot \vert x)) \underbrace{\dd \pi(x)}_{= \dd \bbP(x)}.\label{WOT_primal}
\end{gather}
EOT formulation \eqref{EOT_primal} is a particular case of weak OT problem \eqref{WOT_primal} for weak transport cost: 
\begin{gather}
    C_{\EOT}(x, \pi(\cdot \vert x)) = \int_{\cY} c(x, y) \dd \pi(y \vert x) - \varepsilon H(\pi(\cdot \vert x))\label{weak_EOT_cost}.
\end{gather}
Note that if weak cost $C$ is \textit{strictly} convex and lower semicontinuous, as it is the case for $C_{\EOT}$, the solution for \eqref{WOT_primal} exists and unique \citep{backhoff2019existence}.

\par\textbf{Weak OT dual formulation of the EOT problem}. Similar to the case of classical Kantorovich OT \eqref{OT_primal}, the weak OT problem permits the dual representation. Let $f \in \cC(\cY)$. Following \citep[Eq. (1.3)]{backhoff2019existence} one introduces \textit{weak} $C$-transform $f^{C} : \cX \rightarrow \bbR$ by
\begin{gather}
    f^{C}(x) \defeq \inf\limits_{\mu \in \cP(\cY)} \bigg\{ C(x, \mu) - \int_{\cY} f(y) \dd \mu(y) \bigg\} \label{weak_c_transform}.
\end{gather}
For our particular case of EOT-advised weak OT cost \eqref{weak_EOT_cost}, equation \eqref{weak_c_transform} reads as
\begin{gather}
    f^{C_{\EOT}}(x) = \min\limits_{\mu \in \cP(\cY)} \bigg\{ \int_{\cY} c(x, y) \dd \mu(y) - \varepsilon H(\mu)  -  \int_{\cY} f(y) \dd \mu(y) \bigg\} \defeq \min\limits_{\mu \in \cP(\cY)} \cG_{x, f} (\mu). \label{weak_c_EOT_transform}
\end{gather} 
Note that the existence and uniqueness of the minimizer for \eqref{weak_c_EOT_transform} follows from Weierstrass theorem \citep[Box 1.1.]{santambrogio2015optimal} along with lower semicontinuity and strict convexity of $\cG_{x, f}$ in $\mu$. The dual weak functional $F_C^{w}: \cC(\cY) \rightarrow \bbR$ for primal WOT problem \eqref{WOT_primal} is
\begin{gather*}
    F_{C}^{w}(f) \defeq \int_{\cX} f^{C}(x) \dd \bbP(x) + \int_{\cY} f(y) \dd \bbQ(y). \label{WOT_dual}
\end{gather*}
Thanks to the compactness of $\cX$ and $\cY$, there is the strong duality \citep[Thm. 9.5]{gozlan2017kantorovich}:
\begin{eqnarray}
\EOT_{c, \varepsilon}(\bbP, \bbQ) = \sup\limits_{f \in \cC(\cY)} \bigg\{\int_{\cX} \min\limits_{\mu_x \in \cP(\cY)} \cG_{x, f}(\mu_x) \dd \bbP(x) + \int_{\cY} f(y) \dd \bbQ(y)\bigg\} = \sup\limits_{f \in \cC(\cY)}  F_{C_{\EOT}}^{w}(f). \label{EOT_dual_final}
\end{eqnarray}
We say that \eqref{EOT_dual_final} is the \textit{weak dual objective}. It will play an important role in our further analysis.

\vspace{-2mm}\subsection{Energy-based models}\vspace{-1mm}

\par The EBMs are a fundamental class of deep Generative Modelling techniques \citep{lecun2006tutorial, salakhutdinov2007restricted} which parameterize distributions of interest $\mu \in \cP(\cY)$ by means of the Gibbs-Boltzmann distribution density:
\begin{gather}
    \dv{\mu(y)}{y} = \frac{1}{Z} \exp\left(-E(y)\right). \label{Gibbs_distrib}
\end{gather}
In the equation above $E : \cY \rightarrow \bbR$ is the \textit{Energy function} (negative unnormalized log-likelihood), and  $Z = \int_{\cY} \exp(-E(y)) \dd y$ is the normalization constant, known as the partition function.
\par Let $\mu \in \cP(\cY)$ be a true data distribution which is accessible by samples and $\mu_{\theta}(y), \theta \in \Theta$ be a parametric family of distributions approximated using, e.g., a deep Neural Network $E_{\theta}$, which imitates the Energy function in \eqref{Gibbs_distrib}. In EBMs framework, one tries to bring the parametric distribution $\mu_\theta$ to the reference one $\mu$ by optimizing the KL divergence between them.
The minimization of $\KL{\mu}{\mu_{\theta}}$ is done via gradient descent by utilizing the well-known gradient
\citep{xie2016theory}:
\begin{gather}
    \pdv{\theta}\KL{\mu}{\mu_{\theta}} = \int_{\cY} \pdv{\theta} E_{\theta}(y) \dd \mu(y) - \int_{\cY} \left[\pdv{\theta} E_{\theta}(y)\right] \dd \mu_{\theta}(y). \label{EBM_loss}
\end{gather}
The expectations on the right-hand side of \eqref{EBM_loss} are estimated by Monte-Carlo, which requires samples from $\mu$ and $\mu_{\theta}$.
While the former are given, the latter are usually obtained via Unadjusted Langevin Algorithm (ULA) \citep{roberts1996exponential}.
It iterates the discretized Langevin dynamics
\begin{gather}
    Y_{l + 1} = Y_{l} - \frac{\eta}{2} \pdv{y}E_{\theta}(Y_l) + \sqrt{\eta}\xi_l \, , \quad \xi_l \sim \mathcal{N}(0, 1), \label{langevin_sampling}
\end{gather}
starting from a simple prior distribution $Y_0 \sim \mu_0$, for $L$ steps, with a small discretization step $\eta > 0$. In practice, there have been developed a lot of methods, which improve or circumvent the procedure above by informative initialization \citep{hinton2002training, du2019implicit}, more sophisticated MCMC approaches \citep{lawson2019energy, qiu2020unbiased, nijkamp2022mcmc}, regularizations \citep{du2021improved, kumar2019maximum}, explicit auxiliary generators \citep{xie2018cooperative, yin2022learning, han2019divergence, gao2020flow}. The application of these EBM improvements for the EOT problem is a fruitful avenue for future work. For a more in-depth discussion of the methods for training EBMs, see a recent survey \citep{song2021train}.

\vspace{-2mm}\section{Taking up EOT problem with EBMs}\label{sec:method}\vspace{-2mm}

In this section, we connect EBMs and the EOT problem and exhibit our proposed methodology. At first, we present some theoretical results which characterize weak dual objective \eqref{EOT_dual_final} and its optimizers (\S \ref{subsec:method:weak_eot_reformulation}). Secondly, we develop the optimization procedure (\S  \ref{subsec:opt_procedure}) and corresponding algorithm capable of implicitly recovering EOT plans. Thirdly, we establish generalization bounds for our proposed method (\S \ref{subsec:method:slt_bound}). All \underline{proofs} are situated in Appendix \ref{app:Proofs}.

\vspace{-2mm}\subsection{Energy-guided reformulation of weak dual EOT}\label{subsec:method:weak_eot_reformulation}\vspace{-1mm}
We start our analysis by taking a close look at objective \eqref{EOT_dual_final}. 
The following proposition characterizes the inner $\min_{\mu_x}$ optimization problem arising in \eqref{EOT_dual_final}.
\begin{theorem}[Optimizer of weak $C_{\EOT}$-transform] Let $f \in \cC(\cY)$ and $x \in \cX$. Then inner weak dual objective $\min_{\mu \in \cP(\cY)} \cG_{x, f}(\mu)$ \eqref{weak_c_EOT_transform} permits the unique minimizer $\mu_x^f$ which is given by 
\begin{gather}
    \dv{\mu_x^f(y)}{y} \defeq \frac{1}{Z(f, x)} \exp\left(\frac{f(y) - c(x, y)}{\varepsilon}\right) ,
    \label{inner-inf}
\end{gather}
where $Z(f, x) \defeq \int_{\cY}  \exp\left(\frac{f(y) - c(x, y)}{\varepsilon}\right) \dd y$.
\label{prop_c_transform_minimizer}
\end{theorem}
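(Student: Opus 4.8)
The plan is to recognize that $\cG_{x, f}$, up to an additive constant independent of $\mu$, equals $\varepsilon$ times a Kullback--Leibler divergence whose unique zero is exactly the claimed Gibbs density. First I would argue that it suffices to consider $\mu \in \cPac(\cY)$: for any $\mu$ that fails to be absolutely continuous the differential entropy $H(\mu)$ is $-\infty$, so $\cG_{x, f}(\mu) = +\infty$ and such $\mu$ cannot be a minimizer. Writing $p(y) \defeq \dv{\mu(y)}{y}$ for the competitor densities and expanding $H(\mu) = -\int_{\cY} p(y)\log p(y)\, \dd y$, the functional from \eqref{weak_c_EOT_transform} becomes $\cG_{x, f}(\mu) = \int_{\cY}\big(c(x, y) - f(y)\big)p(y)\, \dd y + \varepsilon \int_{\cY} p(y)\log p(y)\, \dd y$.

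Next I would introduce the candidate density $q(y) \defeq \frac{1}{Z(f, x)} \exp\!\big(\frac{f(y) - c(x, y)}{\varepsilon}\big)$ and check it is well defined: since $\cY$ is compact and $c, f$ are continuous, the exponent is bounded and $q$ is strictly positive, so $Z(f, x) \in (0, \infty)$ and $q$ integrates to one. The algebraic crux is the identity $\frac{c(x, y) - f(y)}{\varepsilon} = -\log q(y) - \log Z(f, x)$, obtained by taking logarithms in the definition of $q$. Substituting it into the displayed expression for $\cG_{x, f}$ and using $\int_{\cY} p(y)\, \dd y = 1$ yields $\frac{1}{\varepsilon}\cG_{x, f}(\mu) = \int_{\cY} p(y)\log\frac{p(y)}{q(y)}\, \dd y - \log Z(f, x) = \KL{\mu}{\mu_x^f} - \log Z(f, x)$, where $\mu_x^f$ denotes the distribution with density $q$.

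To finish I would invoke Gibbs' inequality: $\KL{\mu}{\mu_x^f} \ge 0$ with equality if and only if $\mu = \mu_x^f$. Hence $\cG_{x, f}$ attains its minimal value $-\varepsilon \log Z(f, x)$ precisely at $\mu = \mu_x^f$, giving both existence and uniqueness of the minimizer. (Uniqueness is in any case already guaranteed by the strict convexity and lower semicontinuity of $\cG_{x, f}$ recorded just after \eqref{weak_c_EOT_transform}, so the KL rewriting only needs to exhibit $\mu_x^f$ as the point where the divergence vanishes.)

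The main obstacle is not the computation, which is routine once the KL reformulation is spotted, but the careful justification of the integral rearrangements: one must verify that $\int_{\cY} p\log q$ and $H(\mu)$ are simultaneously finite so that collapsing the two integrals into a single $\KL{\mu}{\mu_x^f}$ is legitimate, and that the infimum over $\cP(\cY)$ is genuinely attained rather than merely approached. Compactness of $\cY$ together with boundedness of $c - f$ controls both points, since it makes $\log q$ bounded and keeps $\KL{\mu}{\mu_x^f}$ finite on the relevant competitors.
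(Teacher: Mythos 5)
Your argument is correct and follows essentially the same route as the paper's own proof: both rewrite $\cG_{x,f}(\mu) = -\varepsilon \log Z(f,x) + \varepsilon\,\KL{\mu}{\mu_x^f}$ and conclude by the nonnegativity of the KL divergence (vanishing iff $\mu = \mu_x^f$), with non--absolutely-continuous $\mu$ excluded because both sides become $+\infty$. Your version is slightly more explicit about the well-definedness of the Gibbs density and the legitimacy of the integral rearrangements, but the substance is identical.
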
\vspace{-3mm}
By substituting minimizer \eqref{inner-inf} to \eqref{weak_c_EOT_transform}, we obtain the close form for the weak $C_{\EOT}$-transform:
\begin{gather}
    f^{C_{\EOT}}(x) = \cG_{x, f}(\mu_x^f) =  - \varepsilon \log Z(f, x) = -\varepsilon \log \bigg(\int_{\cY}  \exp\left(\frac{f(y) - c(x, y)}{\varepsilon}\right) \dd y \bigg). \label{weak_C_transform_exact}
\end{gather}
The equation \eqref{weak_C_transform_exact} resembles $(c, \varepsilon)$-transform \citep[Eq. 4.15]{genevay2019entropy} appearing in standard \textit{semi-dual EOT} formulation \citep[\S 4.3]{genevay2019entropy}. For completeness, we shortly introduce \underline{the dual EOT and semi-dual EOT} problems in Appendix \ref{app:ext_background}, relegating readers to \citep{genevay2019entropy} for a more thorough introduction.
In short, it is the particular form of weak dual EOT objective, which \textbf{differs} from semi-dual EOT objective, and allows us to utilize EBMs, as we show in \S \ref{subsec:opt_procedure}.
Thanks to \eqref{weak_C_transform_exact}, objective \eqref{EOT_dual_final} permits the reformulation:
\begin{gather}
    \EOT_{c, \varepsilon}(\bbP, \bbQ)= \sup\limits_{f \in \cC(\cY)}  F_{C_{\EOT}}^{w}(f) = \sup\limits_{f \in \cC(\cY)} \bigg\{- \varepsilon \int_{\cX} \log Z(f, x) \dd \bbP(x) + \int_{\cY} f(y) \dd \bbQ(y) \bigg\}. \label{EOT_dual_reformulated}
\end{gather}

For a given $f\in\mathcal{C}(\mathcal{Y})$, consider the distribution $\dd \pi^f(x, y) \defeq \dd \mu_x^f(y) \dd \bbP(x)$. We prove, that the optimization of weak dual objective \eqref{EOT_dual_reformulated} brings $\pi^{f}$ closer to the optimal plan $\pi^{*}$.

\begin{theorem}[Bound on the quality of the plan recovered from the dual variable] For brevity, define the optimal value of $\eqref{EOT_dual_final}$ by $F_{C_{\EOT}}^{w, *} \defeq \EOT_{c, \varepsilon}(\bbP, \bbQ)$.
For every $f\in\mathcal{C}(\mathcal{Y})$ it holds that
\begin{gather}
    F_{C_{\EOT}}^{w, *} - F_{C_{\EOT}}^{w}(f) = \varepsilon\int_{\mathcal{X}}\KL{\pi^{*}(\cdot|x)}{\mu_{x}^{f}}d\mathbb{P}(x)=\varepsilon \KL{\pi^*}{\pi^f}. \label{primal_dual_discrepancy}
\end{gather}
\label{prop_primal_discrepancy_as_dual_discrepancy}
\end{theorem}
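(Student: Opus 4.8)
The plan is to reduce the gap between the optimal dual value and $F_{C_{\EOT}}^{w}(f)$ to a pointwise comparison inside the weak $C_{\EOT}$-transform, and then to recognize that comparison as a Kullback--Leibler divergence via a Gibbs-variational identity.

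First I would rewrite the optimal value in an $f$-augmented form. By strong duality \eqref{EOT_dual_final} we have $F_{C_{\EOT}}^{w,*}=\EOT_{c,\varepsilon}(\bbP,\bbQ)$, and the primal optimum is attained at the EOT plan $\pi^*$ (existence and uniqueness are granted by the strict convexity and lower semicontinuity of $C_{\EOT}$ noted after \eqref{weak_EOT_cost}). Since $\pi^*$ has second marginal $\bbQ$, I may write $\int_{\cY} f\,\dd\bbQ=\int_{\cX}\int_{\cY} f(y)\,\dd\pi^*(y|x)\,\dd\bbP(x)$, so that the primal value takes the form
\[
F_{C_{\EOT}}^{w,*}=\int_{\cX}\cG_{x,f}\big(\pi^*(\cdot|x)\big)\,\dd\bbP(x)+\int_{\cY} f(y)\,\dd\bbQ(y),
\]
with $\cG_{x,f}$ as in \eqref{weak_c_EOT_transform}. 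On the other hand, by definition of the dual functional and Theorem~\ref{prop_c_transform_minimizer},
\[
F_{C_{\EOT}}^{w}(f)=\int_{\cX}\cG_{x,f}\big(\mu_x^{f}\big)\,\dd\bbP(x)+\int_{\cY} f(y)\,\dd\bbQ(y).
\]
Subtracting, the terms against $\bbQ$ cancel and I am left with $F_{C_{\EOT}}^{w,*}-F_{C_{\EOT}}^{w}(f)=\int_{\cX}\big[\cG_{x,f}(\pi^*(\cdot|x))-\cG_{x,f}(\mu_x^{f})\big]\,\dd\bbP(x)$.

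The crucial step is an algebraic identity for $\cG_{x,f}$. Using the explicit density \eqref{inner-inf}, I substitute $c(x,y)-f(y)=-\varepsilon\log\dv{\mu_x^{f}(y)}{y}-\varepsilon\log Z(f,x)$ into $\cG_{x,f}(\mu)$ and rewrite $-\varepsilon H(\mu)=\varepsilon\int_{\cY}\log\dv{\mu(y)}{y}\,\dd\mu(y)$; this collapses everything into
\[
\cG_{x,f}(\mu)=\varepsilon\,\KL{\mu}{\mu_x^{f}}-\varepsilon\log Z(f,x)
\]
for any $\mu\in\cP(\cY)$ with a density. Because the additive term $-\varepsilon\log Z(f,x)$ is independent of $\mu$ and $\KL{\mu_x^{f}}{\mu_x^{f}}=0$, the bracketed difference equals $\varepsilon\,\KL{\pi^*(\cdot|x)}{\mu_x^{f}}$, which after integration against $\bbP$ yields the first claimed equality.

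For the second equality I would invoke the chain rule (disintegration) for KL divergence. Both $\pi^*$ and $\pi^{f}$ share the first marginal $\bbP$, and by construction $\pi^{f}(\cdot|x)=\mu_x^{f}$; hence $\KL{\pi^*}{\pi^f}=\KL{\bbP}{\bbP}+\int_{\cX}\KL{\pi^*(\cdot|x)}{\mu_x^{f}}\,\dd\bbP(x)$, and the first term vanishes. The main obstacle, conceptually, is the first rewriting: one must justify inserting $f$ through the marginal constraint and appeal to existence and attainment of $\pi^*$, while keeping all densities and integrals well defined --- which is exactly what the standing absolute-continuity assumptions $\bbP\in\cPac(\cX)$, $\bbQ\in\cPac(\cY)$ secure. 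Once the Gibbs-variational identity is in place, the remainder is bookkeeping.
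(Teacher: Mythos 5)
Your proof is correct and follows essentially the same route as the paper's: both reduce the dual gap to $\int_{\cX}\big[\cG_{x,f}(\pi^*(\cdot|x))-\cG_{x,f}(\mu_x^f)\big]\dd\bbP(x)$ after cancelling the $\int f\,\dd\bbQ$ terms via the marginal constraint, and both conclude with the disintegration of $\KL{\pi^*}{\pi^f}$. The only difference is organizational: you reuse the Gibbs-variational identity $\cG_{x,f}(\mu)=\varepsilon\KL{\mu}{\mu_x^f}-\varepsilon\log Z(f,x)$ already established in the proof of Theorem~\ref{prop_c_transform_minimizer} (equation \eqref{G_reformulation_appendix}), whereas the paper redoes that algebra explicitly inside the proof of Theorem~\ref{prop_primal_discrepancy_as_dual_discrepancy}.
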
\vspace{-2mm}
From our Theorem \ref{prop_primal_discrepancy_as_dual_discrepancy} it follows that given an approximate maximizer $f$ of dual objective \eqref{EOT_dual_reformulated}, one immediately obtains a distribution $\pi^{f}$ which is close to the optimal plan $\pi^{*}$. Actually, $\pi^f$ is formed by conditional distributions $\mu_x^f$ (Theorem \ref{prop_c_transform_minimizer}), whose energy functions are given by $f$ (adjusted with transport cost $c$). Below we show that $f$ in \eqref{EOT_dual_reformulated} can be optimized analogously to EBMs as well.
\vspace{-2mm}\subsection{Optimization procedure}\label{subsec:opt_procedure}\vspace{-1mm}
Following the standard machine learning practices, we parameterize functions $f \in \cC(\cY)$ as neural networks $f_\theta$ with parameters $\theta \in \Theta$ and derive the loss function corresponding to \eqref{EOT_dual_reformulated} by:
\begin{gather}
    L(\theta) \defeq - \varepsilon \int_{\cX} \log Z(f_{\theta}, x) \dd \bbP(x) + \int_{\cY} f_{\theta}(y) \dd \bbQ(y). \label{weak_dual_loss_func}
\end{gather}
The conventional way to optimize loss functions such as \eqref{weak_dual_loss_func} is the stochastic gradient ascent. In the following result, we derive the gradient of $L(\theta)$ w.r.t. $\theta$.

\begin{theorem}[Gradient of the weak dual loss $L(\theta)$]
It holds true that:
\begin{gather}
    \pdv{\theta} L(\theta) =  - \int_{\cX} \int_{\cY} \left[ \pdv{\theta} f_{\theta}(y)\right] \dd \mu_{x}^{f_{\theta}}(y) \dd \bbP(x) + \int_{\cY} \pdv{\theta} f_{\theta}(y) \dd \bbQ(y). \label{weak_dual_loss_func_grad}
\end{gather}
\label{prop_weak_dual_loss_func_grad}
\end{theorem}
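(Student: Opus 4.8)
The plan is to differentiate $L(\theta)$ term by term, exploiting the fact that the loss in \eqref{weak_dual_loss_func} is a difference of two integrals, one against $\bbP$ and one against $\bbQ$. The second summand $\int_{\cY} f_\theta(y)\,\dd\bbQ(y)$ is linear in $f_\theta$, so differentiating under the integral sign immediately returns $\int_{\cY}\pdv{\theta}f_\theta(y)\,\dd\bbQ(y)$, which is precisely the second term of \eqref{weak_dual_loss_func_grad}. All the real work therefore concentrates on the first summand, which involves the log-partition $\log Z(f_\theta, x)$.

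First I would fix $x\in\cX$ and compute $\pdv{\theta}\log Z(f_\theta, x)$ by the chain rule, writing it as $Z(f_\theta,x)^{-1}\,\pdv{\theta}Z(f_\theta,x)$. Differentiating the partition function $Z(f_\theta,x)=\int_{\cY}\exp\!\big(\tfrac{f_\theta(y)-c(x,y)}{\varepsilon}\big)\dd y$ under the integral sign produces a factor $\tfrac{1}{\varepsilon}\pdv{\theta}f_\theta(y)$ multiplying the exponential integrand. The outer prefactor $-\varepsilon$ then cancels this $\tfrac{1}{\varepsilon}$, and, crucially, dividing the exponential by $Z(f_\theta,x)$ reconstructs exactly the Gibbs density $\dv{\mu_x^{f_\theta}(y)}{y}$ of \eqref{inner-inf}. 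This collapses the inner derivative into $-\int_{\cY}\pdv{\theta}f_\theta(y)\,\dd\mu_x^{f_\theta}(y)$. Integrating this identity against $\bbP$ in $x$ yields the first term of \eqref{weak_dual_loss_func_grad}, and combining with the $\bbQ$-term finishes the computation.

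The only nontrivial point, and the one I would treat with care, is justifying the two interchanges of differentiation and integration: the Leibniz rule for $\pdv{\theta}Z(f_\theta,x)$, and then the outer integral over $\bbP$. The standard sufficient condition is a dominating integrable bound on the $\theta$-derivative of the integrand, uniform in a neighborhood of the point of differentiation. Here this is benign: $\cY$ is compact, $c$ is continuous hence bounded on $\cX\times\cY$, and the network $f_\theta$ together with its gradient $\pdv{\theta}f_\theta$ is continuous and thus bounded on the compact $\cY$. Consequently both $\exp\!\big(\tfrac{f_\theta(y)-c(x,y)}{\varepsilon}\big)$ and $\pdv{\theta}f_\theta(y)$ admit uniform bounds, so the dominated-convergence hypotheses of the Leibniz rule hold and each swap is legitimate. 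Finally, $Z(f_\theta,x)$ is bounded away from zero on the compact domain, so the factor $Z(f_\theta,x)^{-1}$ is well behaved and the outer $\bbP$-integral is finite, which completes the justification.
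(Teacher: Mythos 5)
Your proposal is correct and follows essentially the same route as the paper's proof: differentiate the $\bbQ$-term directly, apply the chain rule to $\log Z(f_\theta,x)$, differentiate the partition function under the integral sign, cancel the $\varepsilon$ factors, and recognize the Gibbs density $\dd\mu_x^{f_\theta}(y)$. Your explicit justification of the differentiation--integration interchanges via compactness and dominated convergence is a welcome addition that the paper leaves implicit.
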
\vspace{-2mm}
Formula \eqref{weak_dual_loss_func_grad} resembles the gradient of Energy-based loss, formula \eqref{EBM_loss}. This allows us to look at EOT problem \eqref{EOT_primal} from the perspectives of EBMs. In order to emphasize the \textit{novelty} of our approach, and, simultaneously, establish the deep connection between the optimization of weak dual objective in form \eqref{EOT_dual_reformulated} and EBMs, below we characterize the similarities and differences between standard EBMs optimization procedure and our proposed EOT-encouraged gradient ascent following $\pdv*{L(\theta)}{\theta}$. 

\textbf{Differences}. In contrast to the case of EBMs, potential $f_\theta$, optimized by means of loss function $L$, does not represent an energy function by itself. However, the tandem of cost function $c$ and $f_\theta$ helps to recover the Energy functions of \textit{conditional} distributions $\mu_x^{f_\theta}$:
\begin{gather*}
    E_{\mu_x^{f_{\theta}}}(y) = \frac{c(x, y) - f_{\theta}(y)}{\varepsilon}.
\end{gather*}
Therefore, one can sample from distributions $\mu_x^{f_\theta}$ following ULA \eqref{langevin_sampling} or using more advanced MCMC approaches \citep{girolami2011riemann, hoffman2014no, samsonov2022localglobal}. In practice, when estimating \eqref{weak_dual_loss_func_grad}, we need samples $(x_1, y_1), (x_2, y_2), \dots (x_N, y_N)$ from distribution $\dd \pi^{f_\theta}(x, y) \defeq \dd \mu_{x}^{f_\theta}(y) \dd \bbP(x).$
They could be derived through the simple two-stage procedure:
\begin{enumerate}[leftmargin=5mm]
    \item Sample $x_1, \dots x_N \sim \bbP$ , i.e., derive random batch from the source dataset.
    \item Sample $y_1 \vert x_1 \sim \mu_{x_1}^{f_\theta}, \dots , y_N \vert x_N \sim \mu_{x_N}^{f_\theta}$, e.g., performing Langevin steps \eqref{langevin_sampling}. 
\end{enumerate}

\textbf{Similarities}. Besides a slightly more complicated two-stage procedure for sampling from generative distribution $\pi^{f_\theta}$, the gradient ascent optimization with \eqref{weak_dual_loss_func_grad} is similar to the gradient descent with \eqref{EBM_loss}. This allows a practitioner to adopt the existing practically efficient architectures of EBMs, e.g., \citep{du2019implicit, du2021improved, gao2021learning, zhao2021learning}, in order to solve EOT.

\textbf{Algorithm.} We summarize our findings and detail our optimization procedure in the Algorithm \ref{algorithm-main}. The procedure is \textit{basic}, i.e., for the sake of simplicity, we specifically remove all technical tricks which are typically used when optimizing EBMs (persistent replay buffers \citep{tieleman2008training}, temperature adjusting, etc.). Particular implementation details are given in the experiments section (\S \ref{sec:experiments}).

We want to underline that our theoretical and practical setup allows performing theoretically-grounded \textbf{truly conditional} data generation by means of EBMs, which unlocks the data-to-data translation applications for the EBM community. Existing approaches leveraging such applications with Energy-inspired methodology lack theoretical interpretability, see discussions in \S \ref{subsec:related_works_EBMs}.
\begin{algorithm}[h!]
    {
        \SetAlgorithmName{Algorithm}{empty}{Empty}
        \SetKwInOut{Input}{Input}
        \SetKwInOut{Output}{Output}
        \Input{Source and target distributions $\bbP$ and $\bbQ$, accessible by samples;\\
        Entropy regularization coefficient $\varepsilon > 0$, cost function $c(x, y): \bbR^{D_x}\times \bbR^{D_y} \rightarrow \bbR$;\\
        number of Langevin steps $K > 0$, Langevin discretization step size $\eta > 0$; \\
        basic noise std $\sigma_0 > 0$; potential network $f_{\theta} : \bbR^{D_y} \rightarrow \bbR$, batch size $N > 0$.
        }
        \Output{trained potential network $f_{\theta^*}$ recovering optimal conditional EOT plans}
        \For{$i = 1, 2, \dots$}{
            Derive batches $\{x_n\}_{n = 1}^N = X \sim \bbP, \{y_n\}_{n = 1}^N = Y \sim \bbQ$ of sizes N\; 
            Sample basic noise $Y^{(0)} \sim \cN(0, \sigma_0)$ of size N\;
            \For{$k = 1, 2, \dots, K$}{
                Sample $Z^{(k)} = \{z_n^{(k)}\}_{n = 1}^{N}$, where $z_n^{(k)} \sim \cN(0, 1)$\;
                Obtain $Y^{(k)} = \{y_n^{(k)}\}_{n = 1}^N$ with Langevin step:\\
                $y_n^{(k)} \leftarrow y_n^{(k - 1)}\ + \frac{\eta}{2 \varepsilon} \cdot \mbox{\texttt{stop\_grad}}\!\left(\pdv{y}\left[f_{\theta}(y) - c(x_n, y)\right]\big\vert_{y = y_n^{(k - 1)}}\right) + \sqrt{\eta} z^{(k)}_n$
            }
            
            $\widehat{L} \leftarrow - \frac{1}{N}\bigg[ \sum\limits_{y_n^{(K)} \in Y^{(K)}} f_{\theta}\left(y_n^{(K)}\right) \bigg] + \frac{1}{N}\bigg[ \sum\limits_{y_n \in Y} f_\theta \left(y_n\right)\bigg]$\;
            
            Perform a gradient step over $\theta$ by using $\frac{\partial\widehat{L}}{\partial \theta}$\;
        }
        \caption{Entropic Optimal Transport via Energy-Based Modelling}\label{algorithm-main}
    }
\end{algorithm}\vspace{-2mm}
\subsection{Generalization bounds for learned entropic plans}\label{subsec:method:slt_bound}\vspace{-1mm}
Below, we sort out the question of how far a recovered plan is from the true optimal plan $\pi^{*}$. 

In practice, the source and target distributions are given by empirical samples $X_{N} = \{x_n\}_{m = 1}^{N} \sim \bbP$ and $Y_{M} = \{y_m\}_{m = 1}^{M} \sim \bbQ$, i.e., finite datasets. Besides, the available potentials $f$ come from restricted functional class $\cF \subset \cC(\cY)$, e.g., $f$ are neural networks. Therefore, in practice, we actually optimize the following \textit{empirical counterpart} of the weak dual objective \eqref{EOT_dual_reformulated}
\begin{gather*}
    \max\limits_{f \in \cF}  \widehat{F}_{C_{\EOT}}^{w}(f)\defeq \max\limits_{f \in \cF} \bigg\{ - \varepsilon \frac{1}{N} \sum_{n = 1}^{N} \log Z(f, x_n) + \frac{1}{M} \sum_{m = 1}^{M} f(y_m) \bigg\}.
    \label{empirical-risk}
\end{gather*}
and recover $\widehat{f} \defeq \argmax_{f \in \cF} \widehat{F}_{C_{\EOT}}^{w}(f)$.
A question arises: \textbf{\textit{how close is $\pi^{\widehat{f}}$ to the OT plan $\pi^{*}$}}?

Our Theorem \ref{thm:dual_slt_generalization} below characterizes the error between $\pi^{\widehat{f}}$ and $\pi^{*}$ arising due to \textit{approximation} ($\cF$ is restricted), and \textit{estimation} (finite samples of $\mathbb{P},\mathbb{Q}$ are available) errors. To bound the estimation error, we employ the well-known Rademacher complexity \citep[\wasyparagraph 26]{shalev2014understanding}.
\begin{theorem}[Finite sample learning guarantees]Denote the functional class of weak $C_{\EOT}$-transforms corresponding to $\cF$ by $\cF^{C_{\EOT}} = \{ - \varepsilon \log Z(f, \cdot): f \in \cF \}$. Let   $\cR_N( \cF, \bbQ)$ and $\cR_M( \cF^{C_{\EOT}}, \bbP)$ denote the Rademacher complexities of functional classes $\cF$ and $\cF^{C_{\EOT}}$ w.r.t. $\bbQ$ and $\bbP$ for sample sizes $N$ and $M$, respectively. Then the following upper bound on the error between $\pi^{*}$ and $\pi^{\hat{f}}$ holds:
\begin{gather}
    \mathbb{E}\hspace{0.5mm}\big[\KL{\pi^{*}}{\pi^{\hat{f}}}\big] 
    \leq \overbrace{\varepsilon^{-1}\big[ 4 \cR_N( \cF^{C_{\EOT}}, \bbP)+4 \cR_M( \cF, \bbQ)\big]}^{\text{Estimation error}}+ \overbrace{\varepsilon^{-1}\big[F_{C_{\EOT}}^{w, *} - \max\limits_{f \in \cF}  F^{w}_{C_{\EOT}}(f)\big]}^{\text{Approximation error}}. \label{dual_slt_generalization}
\end{gather}
where the expectation is taken over random realizations of datasets $X_{N} \sim \bbP , Y_{M} \sim \bbQ$ of sizes $N, M$.
\label{thm:dual_slt_generalization}
\end{theorem}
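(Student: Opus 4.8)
The plan is to reduce the KL error to a suboptimality gap of the weak dual functional and then control that gap by a standard empirical risk minimization (ERM) decomposition. First I would invoke Theorem \ref{prop_primal_discrepancy_as_dual_discrepancy}, which gives the \emph{exact} identity $\KL{\pi^*}{\pi^{\hat f}} = \varepsilon^{-1}\big[F_{C_{\EOT}}^{w,*} - F_{C_{\EOT}}^{w}(\hat f)\big]$ for the (data-dependent) recovered potential $\hat f \in \cF \subset \cC(\cY)$. Since the identity holds for every fixed realization of the datasets, taking expectation over $X_N\sim\bbP$, $Y_M\sim\bbQ$ reduces the whole theorem to bounding $\bbE\big[F_{C_{\EOT}}^{w,*} - F_{C_{\EOT}}^{w}(\hat f)\big]$.

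Next I would split this gap by inserting the in-class population optimum $f^\star \defeq \argmax_{f\in\cF} F_{C_{\EOT}}^{w}(f)$, which is deterministic:
\[
F_{C_{\EOT}}^{w,*} - F_{C_{\EOT}}^{w}(\hat f) = \underbrace{\big[F_{C_{\EOT}}^{w,*} - \max_{f\in\cF}F_{C_{\EOT}}^{w}(f)\big]}_{\text{approximation}} + \underbrace{\big[F_{C_{\EOT}}^{w}(f^\star) - F_{C_{\EOT}}^{w}(\hat f)\big]}_{\text{estimation}}.
\]
The approximation term is constant across realizations and already matches the corresponding summand of \eqref{dual_slt_generalization}, so all remaining work is to bound the expected estimation term. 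For this I would use the textbook ERM inequality $F^{w}_{C_{\EOT}}(f^\star) - F^{w}_{C_{\EOT}}(\hat f) \le 2\sup_{f\in\cF}\big|\widehat F^{w}_{C_{\EOT}}(f) - F^{w}_{C_{\EOT}}(f)\big|$, which holds because $\hat f$ maximizes the empirical objective while $f^\star$ maximizes the population one (the middle cross-term $\widehat F^{w}(f^\star)-\widehat F^{w}(\hat f)$ is nonpositive, and the two remaining terms are each bounded by the uniform deviation).

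The crucial structural observation is that both objectives split additively into a $\bbP$-part and a $\bbQ$-part. Writing $g_f \defeq -\varepsilon \log Z(f,\cdot) \in \cF^{C_{\EOT}}$, one has $F^{w}_{C_{\EOT}}(f) = \bbE_{\bbP}[g_f] + \bbE_{\bbQ}[f]$ and $\widehat F^{w}_{C_{\EOT}}(f) = \frac1N\sum_n g_f(x_n) + \frac1M\sum_m f(y_m)$. Hence, by the triangle inequality, the uniform deviation separates into a supremum over $\cF^{C_{\EOT}}$ evaluated on $X_N\sim\bbP$ and a supremum over $\cF$ evaluated on $Y_M\sim\bbQ$. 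Applying the standard symmetrization lemma \citep[\wasyparagraph 26]{shalev2014understanding} to each supremum bounds the expected two-sided deviation by twice the corresponding Rademacher complexity, yielding $\bbE[\text{estimation}] \le 4\cR_N(\cF^{C_{\EOT}},\bbP) + 4\cR_M(\cF,\bbQ)$. Dividing through by $\varepsilon$ and recombining with the approximation term gives \eqref{dual_slt_generalization}.

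The step I expect to require the most care is the symmetrization applied to the \emph{transformed} class $\cF^{C_{\EOT}}$: one must verify that $g_f = -\varepsilon\log Z(f,\cdot)$ is a well-defined, bounded, measurable function of $x$, so that the Rademacher machinery applies. This is precisely where compactness of $\cX,\cY$ and continuity of $c$ and of the $f\in\cF$ are used: they guarantee that $Z(f,x)$ stays bounded away from $0$ and $\infty$ uniformly, hence $g_f$ is uniformly bounded over the class. Conveniently, because the statement is phrased directly in terms of $\cR_N(\cF^{C_{\EOT}},\bbP)$, I do not need to further estimate this complexity in terms of $\cR(\cF)$ — treating $\cF^{C_{\EOT}}$ as its own function class keeps the argument clean.
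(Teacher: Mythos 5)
Your proposal is correct and follows essentially the same route as the paper: reduce $\KL{\pi^*}{\pi^{\hat f}}$ to the dual suboptimality gap via Theorem \ref{prop_primal_discrepancy_as_dual_discrepancy}, split off the (deterministic) approximation error, bound the estimation part by uniform deviations of the $\bbP$-part over $\cF^{C_{\EOT}}$ and the $\bbQ$-part over $\cF$, and symmetrize to get the factor $4$. The only cosmetic difference is that you invoke the standard ERM bound $F^{w}_{C_{\EOT}}(f^\star)-F^{w}_{C_{\EOT}}(\hat f)\le 2\sup_{f\in\cF}\lvert\widehat F^{w}_{C_{\EOT}}(f)-F^{w}_{C_{\EOT}}(f)\rvert$ directly, whereas the paper reaches the same two uniform-deviation terms through a three-term telescoping and an auxiliary lemma (Lemma \ref{app:slt_generalization_tech_lemma}).
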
\vspace{-2mm}
We note that there exist many statistical bounds for EOT \citep{genevay2019entropy,genevay2019sample,rigollet2022sample,mena2019statistical,luise2019sinkhorn,del2023improved}, yet they are mostly about sample complexity of EOT, i.e., estimation of the OT \textbf{cost} value, or accuracy of the estimated barycentric projection $x\mapsto \int_{\mathcal{Y}}y\hspace{1mm}d\pi^{*}(y|x)$ in the \textit{non-parametric} setup. In contrast to these works, our result is about the estimation of the entire OT \textbf{plan} $\pi^{*}$ in the \textit{parametric} setup. Our Theorem 4 could be improved by deriving explicit numerical bounds. This can be done by analyzing particular NNs architectures, similar to \citep{klusowski2018approximation, sreekumar2021non}. We leave the corresponding analysis to follow-up research.

\vspace{-2mm}\section{Related works}\vspace{-2mm}

In this section, we look over existing works which are the  most relevant to our proposed method. We divide our survey into two main parts. Firstly, we discuss the EBM approaches which tackle similar practical problem setups. Secondly, we perform an overview of solvers dealing with Entropy-regularized OT. The discussion of \underline{general-purpose OT solvers} is available in Appendix~\ref{app:ext-related:ot-solvers}.

\vspace{-2mm}\subsection{Energy-Based Models for unpaired data-to-data translation}\label{subsec:related_works_EBMs}\vspace{-1mm}

Given a source and target domains $\cX$ and $\cY$, accessible by samples, the problem of unpaired data-to-data translation \citep{zhu2017unpaired} is to transform a point $x \in \cX$ from the source domain to corresponding points $y^x_1, y^x_2, \dots \subset \cY$ from the target domain while ``preserving'' some notion of $x$'s content. In order to solve this problem, \citep{zhao2021unpaired, zhao2021learning} propose to utilize a pretrained EBM of the target distribution $\bbQ$, initialized by source samples $x \sim \bbP$. In spite of plausibly-looking practical results, the theoretical properties of this approach remain unclear. Furthermore, being passed through MCMC, the obtained samples may lose the conditioning on the source samples. In contrast, our proposed approach is free from the aforementioned problems and can be tuned to reach the desired tradeoff between the conditioning power and data variability. The authors of \citep{xie2021learning} propose to cooperatively train CycleGAN and EBMs to solve unpaired I2I problems. However, in their framework, EBMs just help to stabilize the training of I2I maps and can not be considered as primal problem solvers.

\vspace{-2mm}\subsection{Entropy-regularized OT}\label{subsec:related_works_OT}\vspace{-1mm}

To the best of our knowledge, all continuous EOT solvers are based either on $\text{KL}$-guided formulation \eqref{EOT_primal_1} \citep{genevay2016stochastic, seguy2018large, daniels2021score} or unconditional entropic one \eqref{EOT_primal_2} with its connection to the Schr\"odinger bridge problem \citep{finlay2020learning, bortoli2021diffusion, gushchin2023entropic, chen2022likelihood, shi2023diffusion}. Our approach seems to be the first which takes advantage of conditional entropic formulation \eqref{EOT_primal}. Methods \citep{genevay2016stochastic, seguy2018large, daniels2021score} exploit dual form of \eqref{EOT_primal_1}, see \citep[Eq. 4.2]{genevay2019entropy}, which is an unconstrained optimization problem w.r.t. a couple of dual potentials $(u, v)$. However, \citep{genevay2016stochastic, seguy2018large} do not provide a direct way for sampling from optimal conditional plans $\pi^*(y \vert x)$, since it requires the knowledge of target distribution $\bbQ$. In order to leverage this issue, \citep{daniels2021score} proposes to employ a separate score-based model approximating $\bbQ$. At the inference stage \citep{daniels2021score} utilizes MCMC sampling, which makes this work to be the closest to ours. The detailed comparison is given below:
\vspace{-2mm}\begin{enumerate}[leftmargin=5mm]
    \item The authors of \citep{daniels2021score} optimize dual potentials $(u, v)$ following the dual form of \eqref{EOT_primal_1}. This procedure is unstable for small $\varepsilon$ as it requires the exponentiation of large numbers which are of order $\varepsilon^{-1}$. At the same time, a ``small $\varepsilon$'' regime is practically important for some downstream applications when one needs a close-to-deterministic plan between $\cX$ and $\cY$ domains. On the contrary, our Energy-based approach does not require exponent computation and can be adapted for a small $\varepsilon$ by proper adjusting of ULA \eqref{langevin_sampling} parameters (step size, number of steps, etc.).
    \item In \citep{daniels2021score}, it is mandatory to have \textit{three} models, including a third-party score-based model. Our algorithm results in a \textit{single} potential $f_\theta$ capturing all the information about the OT conditional plans and only optionally may be combined with an extra generative model (\wasyparagraph\ref{sec-exp-unpaired-i2i}).
\end{enumerate}\vspace{-2mm}
The alternative EOT solvers \citep{finlay2020learning, bortoli2021diffusion, gushchin2023entropic, chen2022likelihood, vargas2021solving, shi2023diffusion} are based on the connection between primal EOT \eqref{EOT_primal_2} and the \textit{Schr\"odinger bridge} problem. The majority of these works model the EOT plan as a time-dependent stochastic process with learnable drift and diffusion terms, starting from $\bbP$ at the initial time and approaching $\bbQ$ at the final time. This requires resource-consuming techniques to solve stochastic differential equations. Moreover, the aforementioned methods work primarily with the quadratic cost and hardly could be accommodated for a more general case.

\vspace{-2mm}\section{Experimental illustrations}\label{sec:experiments}\vspace{-2mm}

In what follows, we demonstrate the performance of our method on toy 2D scenario, Gaussian-to-Gaussian and high-dimensional AFHQ \textit{Cat/Wild}$\rightarrow$\textit{Dog} image transformation problems solved using the latent space of a pretrained StyleGAN2-ADA \citep{karras2020training}. In the first two experiments the cost function is chosen to be squared halved $l_2$ norm: $c(x, y) = \frac{1}{2} \Vert x - y \Vert_2^2$, while in the latter case, it is more tricky and involves StyleGAN generator. An \underline{additional experiment} with Colored MNIST images translation setup is considered in Appendix~\ref{app:extexp:cmnist}.
\par Our code is written in \texttt{PyTorch} and publicly available at \url{https://github.com/PetrMokrov/Energy-guided-Entropic-OT}. The actual neural network architectures as well as practical training setups are disclosed in the corresponding subsections of Appendix \ref{app:Experimenta_details}.

\vspace{-2mm}\subsection{Toy 2D}\vspace{-1mm}

We apply our method for 2D \textit{Gaussian}$\rightarrow$\textit{Swissroll} case and demonstrate the qualitative results on Figure \ref{fig:toy-performance} for Entropy regularization coefficients $\varepsilon = 0.1, 0.001$.
Figure \ref{fig:toy-fitted} shows that our method succeeds in transforming source distribution $\bbP$ to target distribution $\bbQ$ for both Entropy regularization coefficients. In order to ensure that our approach learns optimal conditional plans $\pi^*(y \vert x)$ well, and correctly solves EOT problem, we provide Figures \ref{fig:toy-fitted-conditional} and \ref{fig:toy-dot-conditional}. On these images, we pick several points $x \in \cX$ and demonstrate samples from the conditional plans $\pi(\,\cdot\, \vert x)$, obtained either by our method ($\pi(\cdot \vert x) = \mu_{x}^{f_\theta}$) or by discrete EOT solver \citep{flamary2021pot}. In contrast to our approach, the samples generated by the discrete EOT solver come solely from the training dataset. Yet these samples could be considered as a fine approximation of ground truth in 2D.

\begin{figure}[!h]
\hspace{0mm}\begin{subfigure}{0.24\linewidth}
\centering
\includegraphics[width=0.982\linewidth]{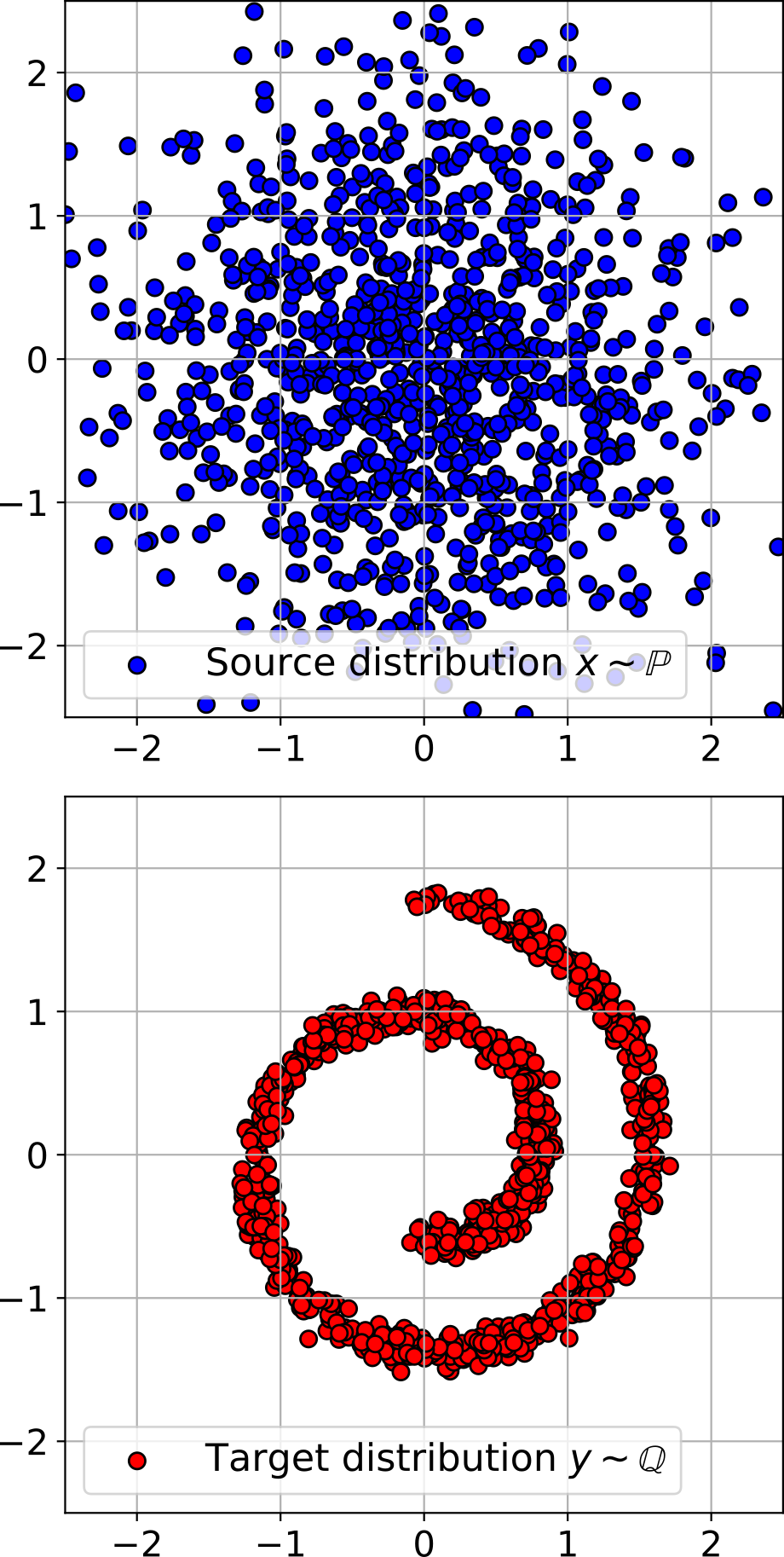}
\vspace{-4mm}\caption{\small \centering Input and target\newline distributions $\mathbb{P}$ and $\mathbb{Q}$.}\vspace*{5.5mm}
\label{fig:toy-source-target}
\end{subfigure}\hspace*{1mm}
\begin{subfigure}{0.24\linewidth}
\centering
\includegraphics[width=\linewidth]{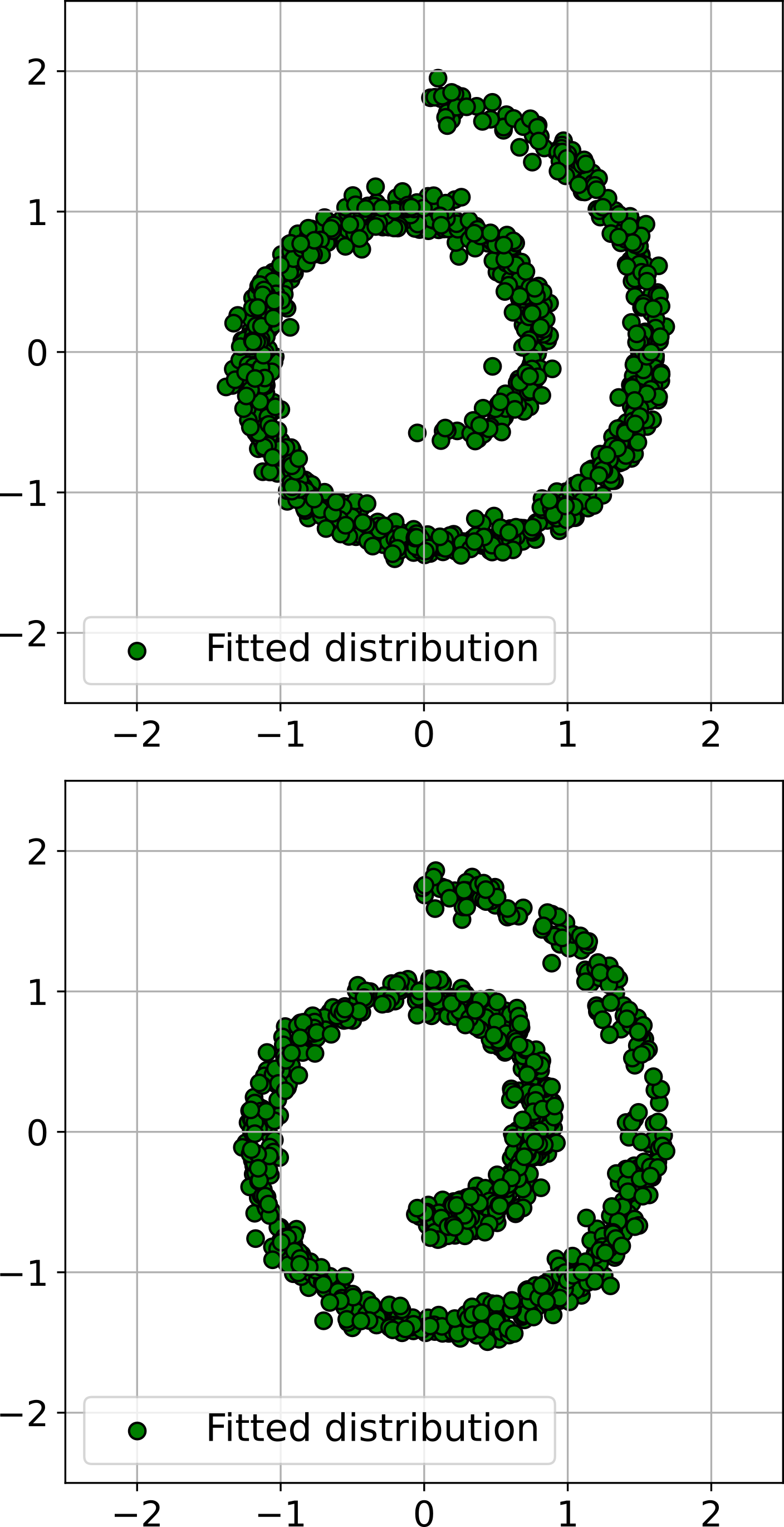}
\vspace{-4mm}\caption{\centering \textbf{Our} fitted distributi- \newline ons; $\varepsilon=10^{-1}$ (up),\newline $\varepsilon=10^{-3}$ (down).}
\vspace*{2mm}
\label{fig:toy-fitted}
\end{subfigure}\hspace{1mm}
\begin{subfigure}{0.24\linewidth}
\centering
\includegraphics[width=\linewidth]{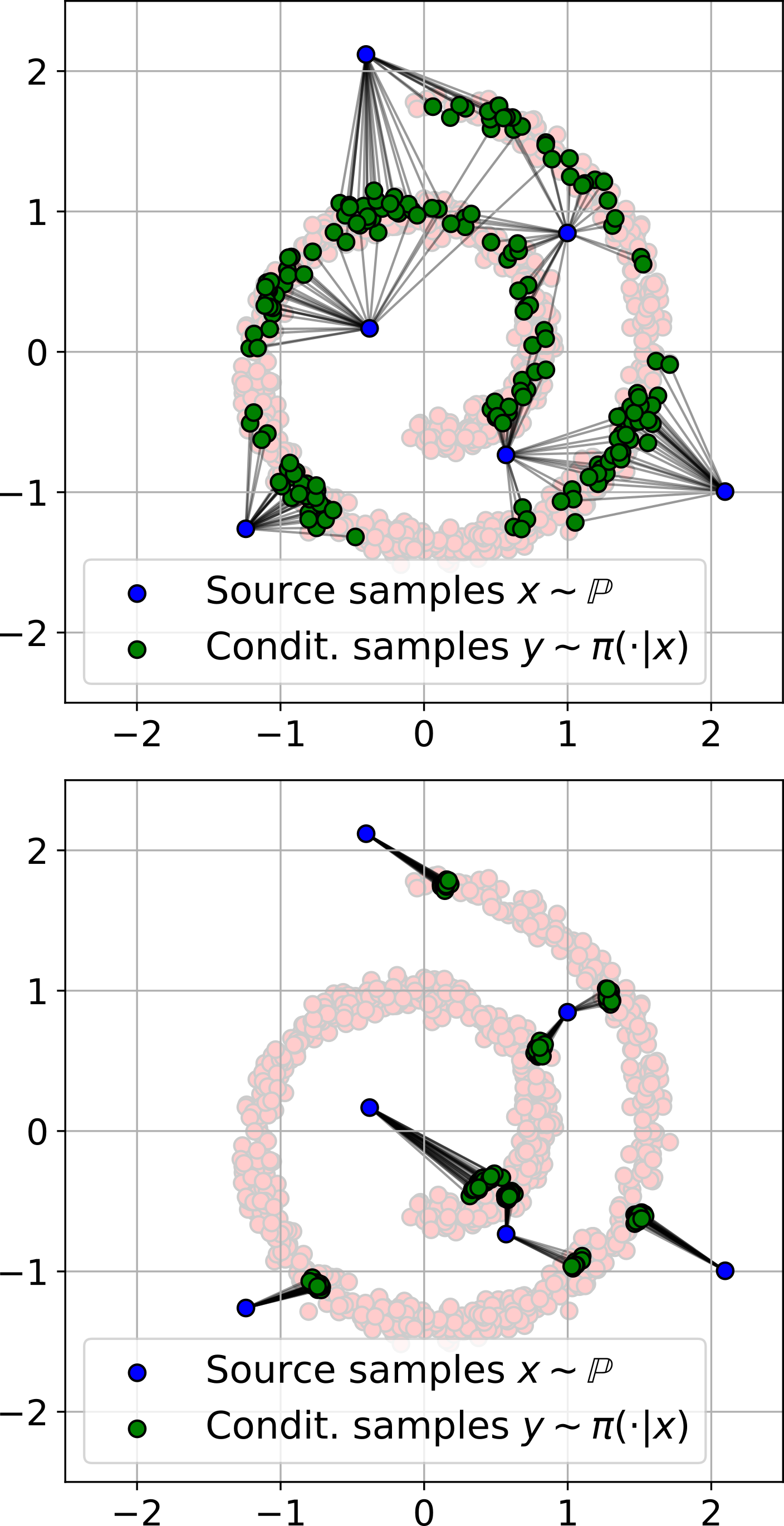}
\vspace{-4mm}\caption{\small \centering \textbf{Our} fitted conditional plans $\pi(\cdot \vert x)$; $\varepsilon=10^{-1}$ (up), $\varepsilon=10^{-3}$ (down)}
\vspace*{2mm}
\label{fig:toy-fitted-conditional}
\end{subfigure}\hspace{1mm}
\begin{subfigure}{0.24\linewidth}
\centering
\includegraphics[width=0.982\linewidth]{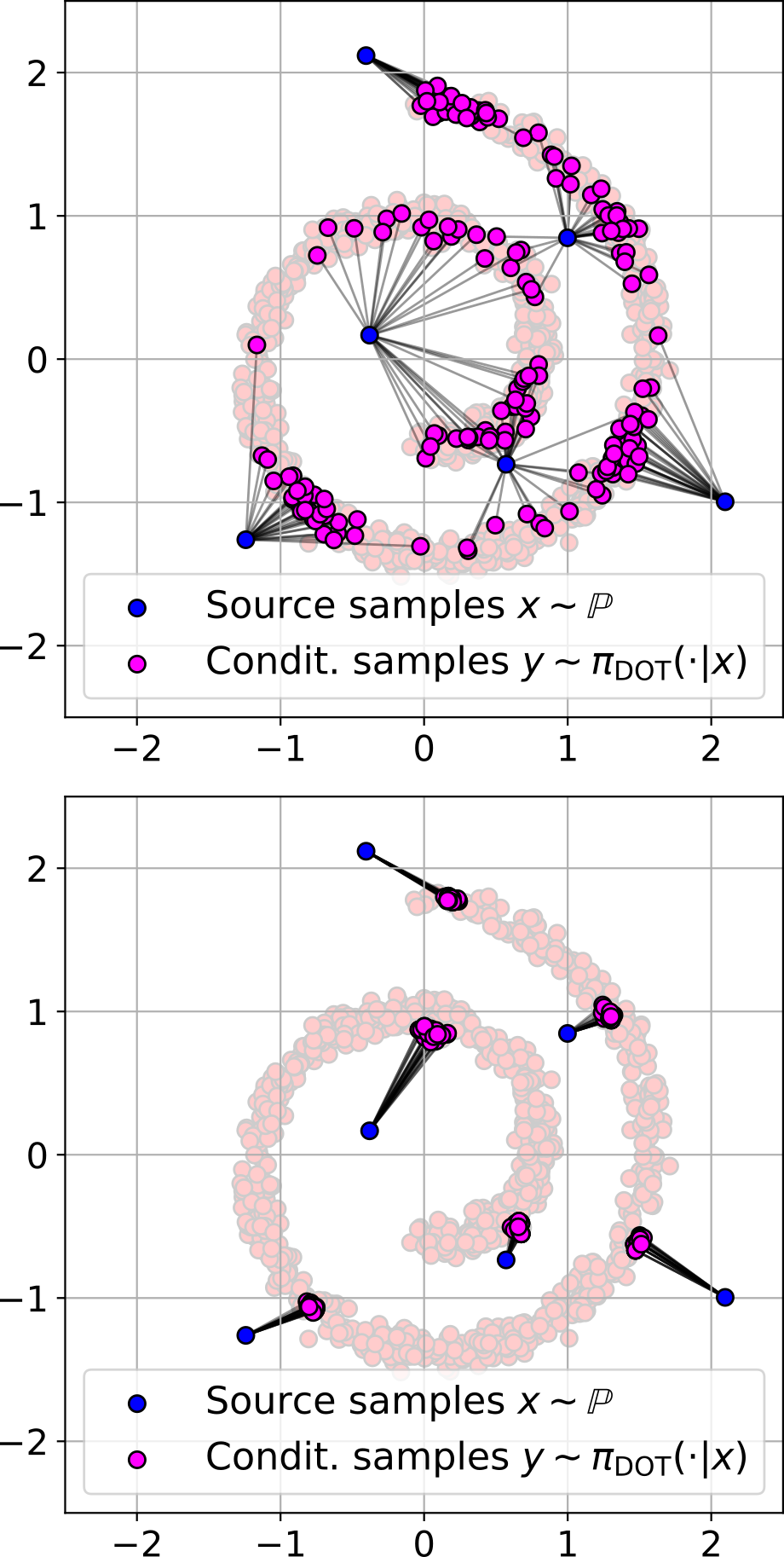}
\vspace{-4mm}\caption{\centering \small Discrete conditional \makecell{plans $\pi_{\text{DOT}}(\cdot \vert x)$; $\varepsilon=10^{-1}$\\ (up), $\varepsilon=10^{-3}$ (down)}}
\vspace*{0.1mm}
\label{fig:toy-dot-conditional}
\end{subfigure}
\vspace{-5mm}
\caption{Performance of Energy-guided EOT on \textit{Gaussian}$\rightarrow$\textit{Swissroll} 2D setup.}
\label{fig:toy-performance}
\vspace{-2mm}
\end{figure}

\begin{table}[!h]
\centering
\scriptsize
\hspace{-5mm}
\begin{subtable}[b]{0.19\linewidth}
\centering
\begin{tabular}{C{25mm}}
  \hline
  Method \\
  \hline
  \textbf{Ours}  \\
  \citep{gushchin2023entropic}   \\
  \citep{bortoli2021diffusion}\\
  \citep{chen2022likelihood} \textit{(Alt)}\\
  \citep{chen2022likelihood} \textit{(Joint)}\\
  \citep{vargas2021solving} \\
  \citep{daniels2021score} \\
  \citep{seguy2018large} \\
  \hline
\end{tabular}\vspace{-0.75mm}
\vspace{6mm}
\label{tbl:g2g_methods}
\end{subtable}
\begin{subtable}[b]{0.27\linewidth}
\scriptsize
\centering
\begin{tabular}{C{5mm}C{5mm}C{5mm}C{5mm}}
  \hline
    $2$ & $16$ & $64$ & $128$ \\
    \hline
  \bf{0.01} & 0.2 & 0.56 & 0.97 \\
  0.02 & \bf{0.08} & \bf{0.19} & \bf{0.34}  \\
  1.97 & 4.22 & 3.44 & 5.87 \\
  1.44 & 8.22 & 3.5 & 4.33 \\
  0.45 & 4.8 & 5.6 & 5.28 \\
  2.96 & 2.94 & 4.06 & 4.0 \\
  n/a & n/a & n/a & n/a \\
  11.3 & 28.6 & 39.4 & 57.4 \\
  \hline
\end{tabular}\vspace{2mm}\hspace{-6mm}
\vspace{-2mm}\caption{\centering $\varepsilon = 0.1$}
\label{tbl:g2g_eps01}
\end{subtable}
\begin{subtable}[b]{0.27\linewidth}
\centering
\scriptsize
\begin{tabular}{C{5mm}C{5mm}C{5mm}C{5mm}}
  \hline
    $2$ & $16$ & $64$ & $128$ \\
    \hline
  0.03 & 0.1 & 0.26 & \bf{0.31} \\
  \bf{0.006} & \bf{0.04} & \bf{0.12} & 0.33  \\
  0.88 & 1.29 & 2.32 & 2.43 \\
  0.75 & 1.7 & 2.45 & 2.64 \\
  0.07 & 0.21 & 0.34 & 0.58 \\
  0.3 & 0.9 & 1.34 & 1.8 \\
  0.92 & 1.36 & 4.62 & 5.33 \\
  6.77 & 14.6 & 25.6 & 47.1 \\
  \hline
\end{tabular}\vspace{2mm}\hspace{-6mm}
\vspace{-2mm}\caption{\centering $\varepsilon = 1$}
\label{tbl:g2g_eps1}
\end{subtable}
\begin{subtable}[b]{0.27\linewidth}
\centering
\scriptsize
\begin{tabular}{C{5mm}C{5mm}C{5mm}C{5mm}}
  \hline
    $2$ & $16$ & $64$ & $128$ \\
    \hline
  0.11 & \bf{0.09} & \bf{0.18} & \bf{0.29} \\
  0.22 & 0.15 & 0.4 & 0.86  \\
  n/a & n/a & n/a & n/a \\
  n/a & n/a & n/a & n/a \\
  0.88 & 2.12 & 2.77 & 3.45 \\
  \bf{0.1} & 0.21 & 0.72 & 1.14 \\
  3.22 & 5.57 & 3.13 & 4.98 \\
  10.2 & 14.6 & 28.9 & 40.1 \\
  \hline
\end{tabular}\vspace{2mm}\hspace{-6mm}
\vspace{-2mm}\caption{\centering $\varepsilon = 10$}
\label{tbl:g2g_eps10}
\end{subtable}
\vspace{-0mm}
\caption{\centering Performance (B$\cW^2_2$-UVP$\downarrow$)  of Energy-guided EOT (ours) and baselines on \textit{Gaussian}$\rightarrow$\textit{Gaussian} tasks for dimensions $D = 2, 16, 64, 128$ and reg. coefficients $\varepsilon = 0.1, 1, 10$.}
\label{tbl:g2g}
\vspace{-4mm}
\end{table}

\vspace{-2mm}\subsection{Gaussian-to-Gaussian}\vspace{-1mm}

Here we validate our method in Gaussian-to-Gaussian transformation tasks in various dimensions ($D_x = D_y = 2, 16, 64, 128$), for which the exact optimal EOT plans are analytically known \citep{janati2020entropic}. We choose $\varepsilon = 0.1, 1, 10$ and compare the performance of our approach with those, described in \S \ref{subsec:related_works_OT}. We report the \underline{B$\cW^2_2$-UVP metric}, see Appendix~\ref{app:Exp:g2g} for the explanation, between the learned $\hat{\pi}$ and optimal $\pi^*$ plans in Table \ref{tbl:g2g}. As we can see, our method manages to recover the optimal plan rather well compared to baselines. \underline{Technical peculiarities} are disclosed in Appendix \ref{app:Experimenta_details}.

\vspace{-2mm}\subsection{High-Dimensional Unpaired Image-to-image Translation}\vspace{-1mm}
\label{sec-exp-unpaired-i2i}

\begin{wrapfigure}{r}{0.5\textwidth}
\vspace{-4mm}
\hspace{0mm}\begin{subfigure}[b]{0.195\linewidth}
\centering
\includegraphics[width=1.0\linewidth]{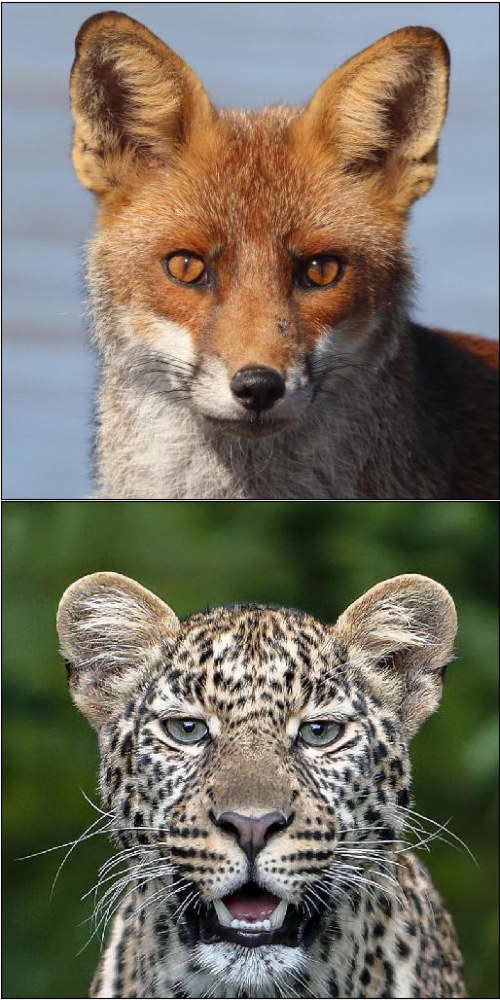}
\vspace{0.51mm}
\label{fig:wild2dog:source}
\end{subfigure}
\hspace{2mm}\begin{subfigure}[b]{0.78\linewidth}
\centering
\includegraphics[width=1.0\linewidth]{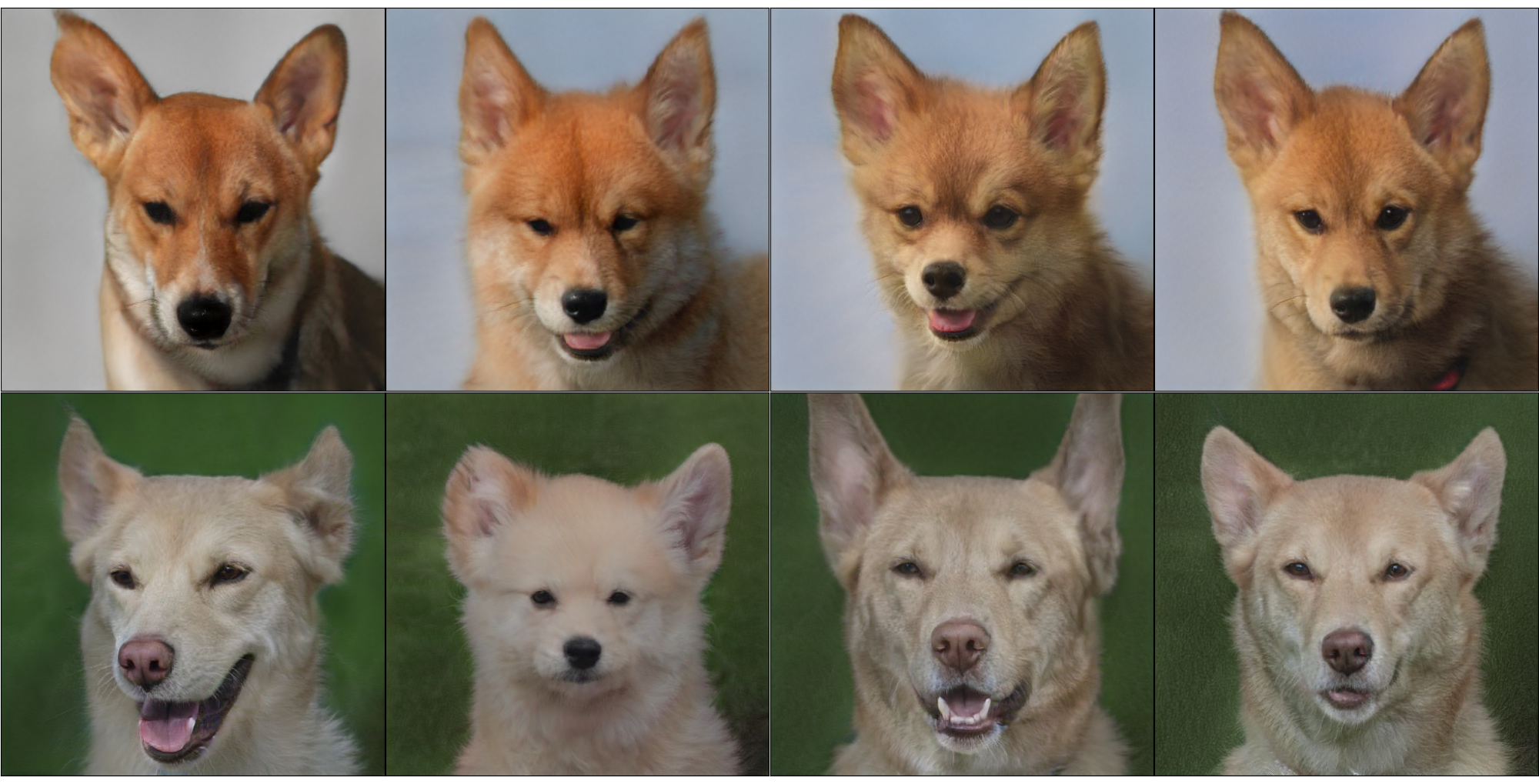}
\vspace{0mm}
\label{fig:wild2dog:generated}
\end{subfigure}
\vspace{-10mm}\caption{\small \centering AFHQ $512\times512$ \textit{Wild}$\rightarrow$\textit{Dog} unpaired I2I by our method in the latent space of StyleGAN2-ADA. \textit{Left:} source; \textit{right:} translated. }
\label{fig:wild2dog}
\vspace{-5mm}
\end{wrapfigure}

In this subsection, we deal with the large-scale unpaired I2I setup. As with many other works in EBMs, e.g., \citep{zhao2021unpaired, tiwary2022boundary}, we consider learning in the latent space. We pick a pre-trained StyleGANV2-Ada \citep{karras2020training} generator $G$ for Dogs AFHQ $512\times 512$ dataset and consider Cat$\rightarrow$Dog (and Wild$\rightarrow$Dog) unpaired translation. As $\mathbb{P}$, we use the dataset of images of cats (or wild); as $\mathbb{Q}$, we use $\mathcal{N}(0,I_{512})$, i.e., the latent distribution of the StyleGAN. We use our method with $\epsilon=1$ to learn the EOT between $\mathbb{P}$ and $\mathbb{Q}$ with cost $\frac{1}{2}\|x-G(z)\|^{2}$, i.e., $\ell^{2}$ between the input image and the image generated from the latent code $z$. Note that our method trains \textbf{only one MLP} network $f_{\theta}$ acting on the latent space, which is then used for inference (combined with $G$). Moreover, our approach \textbf{does not need} a generative model of the source distribution $\bbP$, and \textbf{does not need} encoder (data2latent) networks. The qualitative results are provided in Figures~\ref{fig:teaser} and \ref{fig:wild2dog}. Our method allows us to translate the images from one domain to the other while maintaining the similarity with the input image. For more examples and \underline{qualitative comparisons}, see Appendix~\ref{app:extexp:afhq}. For the quantitative analysis, we compare our approach with popular unpaired I2I models ILVR \citep{choi2021ilvr}, SDEdit \citep{meng2022sdedit}, EGSDE \citep{zhao2022egsde}, CycleGAN \citep{zhu2017unpaired}, MUNIT \citep{huang2018multimodal} and StarGANv2 \citep{choi2020stargan}, the obtained FID metrics are reported in Table~\ref{tbl:afhq-fid}. As we can see, our approach achieves comparable-to-SOTA quality.
\begin{table}[!h]
\small
\centering
\vspace{-2mm}
\begin{tabular}{ccccccccc} 
\hline
 Method & \textbf{Ours} & ILVR & SDEdit & EGSDE & CycleGAN & MUNIT & StarGAN v2 \\ 
 \hline
 Cat $\rightarrow$ Dog FID & $56.6$& $74.37$ & $74.17$ &  $51.04$ & $85.9$& $104.4$ & $54.88$ \\ 
 \hline
 Wild $\rightarrow$ Dog FID  & $65.8$ & $75.33$ & $68.51$ &  $50.43$ & - & - & - \\ 
 \hline
\end{tabular}
\captionsetup{justification=centering}
 \vspace{-2mm}\caption{Baselines FID\protect\footnotemark for Cat $\rightarrow$ Dog and Wild $\rightarrow$ Dog.}
\label{tbl:afhq-fid}
\vspace{-4mm}
\end{table}
\footnotetext{FID scores of the baselines are taken from \citep{zhao2022egsde}. In order to estimate FID, we downscale the generated images to $256\times256$ for a fair comparison with the alternative methods.}

\vspace{-2.7mm}\section{Discussion}\vspace{-2.6mm}
Our work paves a principled connection between EBMs and EOT. The latter is an emergent problem in generative modelling, with potential applications like unpaired data-to-data translation \citep{korotin2023neural}. Our proposed EBM-based learning method for EOT is theoretically grounded and we provide proof-of-concept experiments. We believe that our work will inspire future studies that will further empower EOT with recent EBMs capable of efficiently sorting out truly large-scale setups \citep{du2021improved, gao2021learning, zhao2021learning}.

The \textbf{limitations} of our method roughly match those of basic EBMs. Namely, our method requires using MCMC methods for training and inference. This may be time-consuming. For the extended discussion of limitations, see Appendix~\ref{app:limitations}.

The \textbf{broader impact} of our work is the same as that of any generative modelling research. Generative models may be used for rendering, image editing, design, computer graphics, etc. and simplify the existing digital content creation pipelines. At the same time, it should be taken into account that the rapid development of generative models may also unexpectedly affect some jobs in the industry.

\vspace{-2.7mm}\section{Acknowledgements}\vspace{-2.6mm} The work was supported by the Analytical center under the RF Government (subsidy agreement 000000D730321P5Q0002, Grant No. 70-2021-00145 02.11.2021).

\bibliography{references}
\bibliographystyle{iclr2024_conference}


\newpage
\appendix

\section{Extended Background and Related Works}\label{app:ext_background}

\subsection{Dual/semi-dual EOT problems and their relation to weak dual EOT problem}\label{app:back:dual_semi_dual_EOT}

\par\textbf{Dual formulation of the EOT problem}. Primal EOT problem \eqref{EOT_primal_1} has the dual reformulation \citep{genevay2016stochastic}. Let $u \in \cC(\cX)$ and $v \in \cC(\cY)$. Define the dual functional by
\begin{gather}
    F_{\varepsilon}(u, v) \!\defeq\! \int_{\cX} \!u(x) \dd \bbP(x) \!+\!\! \int_{\cY} \!v(y) \dd \bbQ(y) \!- \varepsilon \!\int_{\cX \times \cY} \!\!\!\exp \left(\frac{u(x) + v(y) - c(x, y)}{\varepsilon}\right) \dd\left[\bbP \times \bbQ\right]\!(x, y). \label{EOT_dual}
\end{gather}
Then the strong duality holds \citep[Proposition 2.1]{genevay2016stochastic}, i.e.,
\begin{gather}
    \EOT_{c, \varepsilon}^{(1)}(\bbP, \bbQ) = \sup\limits_{u \in \cC(\cX), v \in \cC(\cY)} F_{\varepsilon}(u, v). \label{EOT_duality_genevay}
\end{gather}
The $\sup$ here may not be attained in $\mathcal{C}(\mathcal{X}),\mathcal{C}(\mathcal{Y})$, yet it is common to relax the formulation and consider $u\in\mathcal{L}^{\infty}(\mathbb{P})$ and $v\in\mathcal{L}^{\infty}(\mathbb{Q})$ instead \citep{marino2020optimal}. In this case, the $\sup$ becomes $\max$ \citep[Theorem 7]{genevay2019entropy}. The potentials $u^*, v^*$ which constitute a solution of the relaxed \eqref{EOT_duality_genevay} are called the (Entropic) Kantorovich potentials. The optimal transport plan $\pi^*$ which solves the primal problem \eqref{EOT_primal_1} could be recovered from a pair of Kantorovich potentials $(u^*, v^*)$ as
\begin{gather}
    \dd \pi^*(x, y) = \exp\left(\frac{u^*(x) + v^*(y) - c(x, y)}{\varepsilon}\right)\dd \bbP(x) \dd \bbQ(y).
    \label{eot-plan-dual}
\end{gather}
From the practical viewpoint, the dual objective \eqref{EOT_duality_genevay} is an unconstrained maximization problem which can be solved by conventional optimization procedures. The existing methods based on \eqref{EOT_duality_genevay} as well as their limitations and drawbacks are discussed in the related works section, \S \ref{subsec:related_works_OT}.

\par \textbf{Semi-dual formulation of the EOT problem}. Objective \eqref{EOT_duality_genevay} is a convex optimization problem. By fixing a function $v \in \cC(\cY)$ and applying the first-order optimality conditions for the marginal optimization problem $\sup_{u\in\mathcal{C}(\mathcal{X})}F_{\varepsilon}(u, v)$, one can recover the solution of
\begin{gather}
    v^{c, \varepsilon} \defeq \argmax\limits_{u \in \cC(\cX)}F_{\varepsilon}(u, v)
\end{gather}
in the closed-form:
\begin{gather}
    v^{c, \varepsilon}(x) = - \varepsilon \log \left(\int_{\cY} \exp\left(\frac{v(y) - c(x, y)}{\varepsilon}\right) \dd \bbQ(y)\right). \label{c_eps_transform_genevay}
\end{gather}
Function $v^{c, \varepsilon}$ is called \textit{$(c, \varepsilon)$-transform} \citep{genevay2019entropy}. By substituting the argument $u \in \cC(\cX)$ of the $\max$ with $v^{c, \varepsilon}$ in equation \eqref{EOT_duality_genevay} and performing several simplifications, one can recover the objective
\begin{gather}
    \EOT_{c, \varepsilon}^{(1)}(\bbP, \bbQ) = \sup\limits_{v \in \cC(\cY)} \int_{\cX} v^{c, \varepsilon}(x) \dd \bbP(x) + \int_{\cY} v(y) \dd \bbQ(y). \label{EOT_semi_duality_genevay}
\end{gather}
This one is called the \textit{semi-dual} formulation of EOT problem \citep[\S 4.3]{genevay2019entropy}. It is not so popular as the classical dual problem \eqref{EOT_duality_genevay} since the estimation of $v^{c, \varepsilon}$ is non-trivial, yet it has a direct relation to formulation \eqref{EOT_dual_final}, which forms the basis of our proposed method. In order to comprehend this relation, below we compare $(c, \varepsilon)$-transform and $\min_{\mu} \cG_{x, f}(\mu)$, given by \eqref{weak_C_transform_exact}.

\par \textbf{Correspondence between (semi-) dual EOT and weak dual EOT}. As we already pointed out in the main part of the manuscript, equation \eqref{weak_C_transform_exact} which then appears in weak dual objective \eqref{EOT_dual_reformulated} looks similar to $(c, \varepsilon)$-transform \eqref{c_eps_transform_genevay}. The difference is the integration measure, which is $\bbQ$ in the case of \eqref{c_eps_transform_genevay} and the standard Lebesgue one in our case \eqref{weak_C_transform_exact}. 

From the \textit{theoretical} point of view, such dissimilarity is not significant. That is why semi-dual \eqref{EOT_semi_duality_genevay} and weak dual \eqref{EOT_dual_final} optimization problems are expected to share such properties as convergence, existence of optimizers and so on. In particular, there is a relation between potentials $u, v$ that appear in (semi-) dual EOT problems \mref{EOT_duality_genevay, EOT_semi_duality_genevay}, and our optimized potential $f$ from \eqref{EOT_dual_final}.
\par Let $\bbQ \in \cPac(\cY)$, and $E_{\bbQ} : \cY \rightarrow \bbR$ be the energy function of $\bbQ$, i.e., $\dv{\bbQ(y)}{y} \propto \exp\left( - E_\bbQ (y) \right)$. Consider the parameterization of potentials $v$ by means of $f$ as follows:
\begin{gather}
    v(y) \leftarrow f(y) + \varepsilon E_\bbQ (y). \label{v_through_f}
\end{gather}
Then,
\begin{align}
    v^{c, \varepsilon}(x) &= - \varepsilon \log \left(\int_{\cY} \exp\left(\frac{v(y) - c(x, y)}{\varepsilon}\right) \dd \bbQ(y)\right)  \nonumber \\ &= - \varepsilon \log \left(\int_{\cY} \exp\left(\frac{f(y) - c(x, y)}{\varepsilon}\right) \exp\left(E_\bbQ (y)\right) \dv{\bbQ(y)}{y} \dd y\right) \nonumber \\
    &= - \varepsilon \log \left(\int_{\cY} \exp\left(\frac{f(y) - c(x, y)}{\varepsilon}\right) \exp\left(E_\bbQ (y)\right) \exp\left( - E_\bbQ (y)\right) \dd y\right) + \text{Const}(\bbQ) \nonumber \\
    &\overset{\text{see \eqref{weak_C_transform_exact}}}{=} - \varepsilon \log Z(f, x) + \text{Const}(\bbQ). \label{v_c_eps_through_f_ceot}
\end{align}
By substituting \mref{v_through_f, v_c_eps_through_f_ceot} in \eqref{EOT_semi_duality_genevay} we obtain that semi-dual EOT objective \eqref{EOT_semi_duality_genevay} \textit{recovers} our weak dual objective \eqref{EOT_dual_reformulated} up to reparameterization \eqref{v_through_f}:
\begin{align*}
    \EOT_{c, \varepsilon}^{(1)}(\bbP, \bbQ) &= \sup\limits_{v \in \cC(\cY)} \bigg\{\int_{\cX} v^{c, \varepsilon}(x) \dd \bbP(x) + \int_{\cY} v(y) \dd \bbQ(y)\bigg\} \\ 
    &= \underbrace{\text{Const}_1(\bbQ)}_{= \varepsilon H(\bbQ)} + \!\!\!\!\sup\limits_{f \in \cC(\cY) - \varepsilon E_\bbQ} \!\bigg\{\int_{\cX} \big[- \varepsilon \log Z(f, x)\big] \dd \bbP(x) + \int_{\cY} f(y) \dd \bbQ(y)\bigg\}.
\end{align*}
Strictly speaking, the optimization class $\cC(\cY) - \varepsilon E_\bbQ = \{ f - \varepsilon E_\bbQ , f \in \cC(\cY)\}$ in the equation above is different from $\cC(\cY)$ that appears in \eqref{EOT_dual_reformulated}. However, under mild assumptions on $E_{\bbQ}$ the corresponding optimization problems are similar.
One important consequence of the observed equivalence is the existence of optimal potential $f^*$ (not necessary to be continuous) which solves weak dual objective \eqref{EOT_dual_final}. It can be expressed through optimal Kantorovich potential $v^*$ by $f^* = v^* - \varepsilon E_\bbQ$.
\par From the \textit{practical} point of view, the difference between \eqref{weak_C_transform_exact} and \eqref{c_eps_transform_genevay} is much more influential. Actually, it is the particular form of internal weak dual problem solution \eqref{weak_C_transform_exact} that allows us to utilize EBMs, see \S \ref{subsec:opt_procedure}.

\subsection{Discrete and Continuous OT solvers review}\label{app:ext-related:ot-solvers}

\textbf{Discrete OT}. The discrete OT (DOT) is the specific domain in OT research area, which deals with distributions supported on finite discrete sets. There have been developed various methods for solving DOT problems \citep{peyre2019computational}, the most efficient of which deals with discrete EOT \citep{cuturi2013sinkhorn}. In spite of good theoretically-grounded convergence guarantees, it is hard to adopt the DOT solvers for out-of-distribution sampling and mapping, which limits their applicability in some real-world scenarios.

\textbf{Continuous OT}. In the continuous setting, the source and target distributions become accessible only by samples from (limited) budgets. In this case, OT plans are typically parameterized with neural networks and optimized with the help of SGD-like methods by deriving random batches from the datasets. The approaches dealing with such practical setup are called \textit{continuous OT solvers}.

\par There exists a lot of continuous OT solvers \citep{makkuva2020optimal, korotin2021wasserstein, fan2023neural, xie2019scalable, rout2022generative, gazdieva2022optimal, korotin2022kantorovich,taghvaei20192, liu2023flow}. However, the majority of these methods model OT as a deterministic map which for each input point $x$ assigns a single data point $y$ rather than distribution $\pi(y \vert x)$.
Only a limited number of approaches are capable of solving OT problems which require stochastic mapping, and, therefore, potentially applicable for our EOT case.
Here we exclude methods designed \textit{specifically} for EOT and cover them in the further narrative. 
\par The recent line of works \citep{korotin2023neural, korotin2023kernel, asadulaev2024neural} considers dual formulation of weak OT problem \citep{gozlan2017kantorovich} and comes up with $\max\min$ objective for various weak \citep{korotin2023neural, korotin2023kernel} and even general \citep{asadulaev2024neural} cost functionals. However, their proposed methodology requires the estimation of weak cost by samples, which complicates its application for EOT. An alternative concept \citep{xie2019scalable} works with primal OT formulation \eqref{OT_primal} and lifts boundary source and target distribution constraints by WGAN losses. It also utilizes sample estimation of corresponding functionals and can not be directly adapted for EOT setup. 

\section{Proofs}\label{app:Proofs}

\subsection{Proof of Theorem \ref{prop_c_transform_minimizer}}\label{app:Proofs:prop_c_transform}
\begin{proof}
In what follows, we analyze the optimizers of the objective $\min_{\mu \in \cP(\cY)} \cG_{x, f}(\mu)$ introduced in \eqref{weak_c_EOT_transform}. Let $\mu \in \cP(\cY)$. We have:
\begin{align}
    \cG_{x, f}(\mu) &= \int_{\cY} c(x, y) \dd \mu(y) + \varepsilon \int_{\cY} \log \dv{\mu(y)}{y} \dd \mu(y) - \int_{\cY} f(y) \dd \mu(y) \nonumber \\
    &= \varepsilon \int_{\cY} \left( \frac{c(x, y) - f(y)}{\varepsilon} + \log \dv{\mu(y)}{y} \right) \dd \mu(y) \nonumber \\
    &= \varepsilon \int_{\cY} \bigg( \underbrace{- \log Z(f, x)}_{\text{does not depend on }y} + \underbrace{\log Z(f, x) - \frac{f(y) - c(x, y)}{\varepsilon}}_{ = - \log \dv{\mu_x^f(y)}{y}} + \log \dv{\mu(y)}{y} \bigg) \dd \mu(y) \nonumber \\
    &= -\varepsilon \log Z(f, x) + \varepsilon \int_{\cY} \bigg( - \log \dv{\mu_x^f(y)}{y} + \log \dv{\mu(y)}{y} \bigg) \dd \mu(y) \nonumber \\
    &= -\varepsilon \log Z(f, x) + \varepsilon \KL{\mu}{\mu_x^f}. \label{G_reformulation_appendix}
\end{align}
The last equality holds true thanks to the fact that $\mu_x^f \in \cPac(\cY)$ and $\forall y \in \cY: \dv{\mu_x^f(y)}{y} > 0$. This leads to the conclusion that the absolute continuity of $\mu$ ($\mu \ll \lambda$, where $\lambda$ is the Lebesgue measure on $\cY$) is equivalent to the absolute continuity of $\mu$ w.r.t. $\mu_x^f$ ($\mu \ll \mu_x^f$). In particular, if $\mu \notin \cPac(\cY)$, then the last equality in the derivations above reads as $+\infty = +\infty$.
From \eqref{G_reformulation_appendix} we conclude that $\mu_x^f = \argmin\limits_{\mu \in \cP(\cY)} \cG_{x, f}(\mu)$.
\end{proof}

\subsection{Proof of Theorem \ref{prop_primal_discrepancy_as_dual_discrepancy}}\label{app:Proofs:prop_primal_as_dual}

\begin{proof}
    Recall that we denote the optimal value of weak dual objective \eqref{EOT_dual_final} by $F_{C_{\EOT}}^{w, *}$. It equals to $\EOT_{c, \varepsilon} (\bbP, \bbQ)$ given by formula \eqref{EOT_primal}, thanks to the strong duality, i.e.,
    \begin{gather}
        F_{C_{\EOT}}^{w, *} = \int_{\cX \times \cY} c(x, y) \dd \pi^*( x, y) - \varepsilon \int_{\cX} H(\pi^*(y \vert x)) \dd \bbP(x).\label{eq_weak_dual_obj_opt_value}
    \end{gather}
    For a potential $f \in \cC(\cY)$, our Theorem \ref{prop_c_transform_minimizer} yields:
    \begin{gather}
        F_{C_{\EOT}}^w(f) \!=\!\! \int_{\cX} \!\cG_{x, f}(\mu_x^f) \dd \bbP(x) \!+\!\! \int_{\cY} f(y) \dd \bbQ(y) \!\overset{\text{Eq. \eqref{weak_C_transform_exact}}}{=} \!\!\!\!- \varepsilon\! \int_{\cX} \!\log Z(f, x) \dd \bbP(x) \!+\!\! \int_{\cY} f(y) \dd \bbQ(y). \label{eq_exact_weak_dual_functional}
    \end{gather}
    In what follows, we combine \eqref{eq_weak_dual_obj_opt_value} and \eqref{eq_exact_weak_dual_functional}:
    \begin{eqnarray}
        F_{C_{\EOT}}^{w, *} - F_{C_{\EOT}}^w(f) &= \nonumber \\ \int_{\cX \times \cY}\!\!c(x, y) \dd \pi^*( x, y)\! - \!\varepsilon \int_{\cX}\!H(\pi^*(y \vert x)) \dd \bbP(x)\! + \!\varepsilon \int_{\cX} \!\log Z(f, x) \!\underbrace{\dd \bbP(x)}_{= \dd \pi^*(x)}\! - \!\int_{\cY} \!f(y) \!\underbrace{\dd \bbQ(y)}_{= \dd \pi^*(y)} &= \nonumber \\
        \int_{\cX \times \cY} \big[c(x, y) - f(y)\big] \dd \pi^*( x, y) - \varepsilon \int_{\cX} H(\pi^*(y \vert x)) \dd \bbP(x) + \varepsilon \int_{\cX} \log Z(f, x) \dd \pi^*(x) &= \nonumber \\
        - \varepsilon \!\int_{\cX \times \cY} \!\!\!\!\underbrace{\vphantom{\bigg(}\frac{f(y) - c(x, y)}{\varepsilon}}_{= \log\exp \left(\frac{f(y) - c(x, y)}{\varepsilon}\right) }\!\!\! \!\dd \pi^*( x, y) + \varepsilon \!\int_{\cX \times \cY} \!\!\!\log \!\!\!\!\!\!\!\!\!\!\!\!\!\!\!\underbrace{\vphantom{\bigg(} Z(f, x)}_{= \int_{\cY} \exp\left( \frac{f(y) - c(x, y)}{\varepsilon} \right)\dd y}\!\!\!\!\!\!\!\!\!\!\!\!\!\!\! \dd \pi^*(x, y) - \varepsilon \!\int_{\cX} \!H(\pi^*(y \vert x)) \dd \bbP(x) &= \nonumber \\
        - \varepsilon \Bigg\{ \int_{\cX \times \cY} \log \bigg[ \underbrace{\frac{1}{Z(f, x)} \exp \left( \frac{f(y) - c(x, y)}{\varepsilon} \right)}_{ = \dv{\mu_x^f(y)}{y} } \bigg] \dd \pi^*(x, y) \Bigg\} - \varepsilon \int_{\cX} H(\pi^*(y \vert x)) \dd \bbP(x) &= \nonumber \\
        - \varepsilon \int_{\cX \times \cY} \log \left( \dv{\mu_x^f(y)}{y} \right) \dd \pi^*(x, y) - \varepsilon \int_{\cX} H(\pi^*(y \vert x)) \dd \bbP(x) &= \nonumber \\
        - \varepsilon \int_{\cX} \int_{\cY} \log \left( \dv{\mu_x^f(y)}{y} \right) \dd \pi^*(y \vert x) \underbrace{\dd \pi^*(x)}_{= \dd \bbP(x)} + \varepsilon \int_{\cX} \int_{\cY} \log \left( \dv{\pi^*(y \vert x)}{y} \right) \dd \pi^*(y \vert x) \dd \bbP(x) &= \nonumber \\
        \varepsilon \int_{\cX} \Bigg\{\int_{\cY} \bigg[\log \left( \dv{\pi^*(y \vert x)}{y}\right) -  \log \left( \dv{\mu_x^f(y)}{y} \right) \bigg] \dd \pi^*(y \vert x)\Bigg\} \dd \bbP(x)&= \nonumber \\
        \varepsilon \int_{\cX} \KL{\pi^*(\cdot \vert x)}{\mu_x^f} \dd \bbP(x) &\, \hspace{-8mm}. \nonumber
    \end{eqnarray}
    In the last transition of the derivations above, we use the equality 
    \begin{gather*}
    \int_{\cY} \bigg[\log \left( \dv{\pi^*(y \vert x)}{y}\right) -  \log \left( \dv{\mu_x^f(y)}{y} \right) \bigg] \dd \pi^*(y \vert x) = \KL{\pi^*(\cdot \vert x)}{\mu_x^f}.
    \end{gather*}
    It holds true due to the same reasons, as explained in the proof of Theorem \ref{prop_c_transform_minimizer}, after formula \eqref{G_reformulation_appendix}.
    \par We are left to show that $\int_{\cX} \KL{\pi^*(\cdot \vert x)}{\mu_x^f} \dd \bbP(x) = \KL{\pi^*}{\pi^f}$. Since $\dd \pi^*(x, y) = \dd \pi^*(y \vert x) \dd \bbP(x)$ and  $\dd \pi^f(x, y) = \dd \mu_x^f(y) \dd \bbP(x)$, we derive:
    \begin{align}
        \int_{\cX} \KL{\pi^*(\cdot \vert x)}{\mu_x^f} \dd \bbP(x) = 
        \int_{\cX} \int_{\cY} \log \left( \dv{\pi^*(y \vert x)}{\mu_x^f(y)} \right) \dd \pi^*(y \vert x) \dd \bbP(x) &=  \nonumber \\
         \int_{\cX} \int_{\cY} \log \left( \frac{\dd \pi^*(y \vert x) \dd \bbP(x)}{\dd \mu_x^f(y) \dd \bbP(x)} \right) \dd \pi^*(y \vert x) \dd \bbP(x) &= \nonumber \\
        \int_{\cX \times \cY} \log \left( \frac{\dd \pi^*(x, y)}{\dd \pi^f(x, y)} \right) \dd \pi^*(x, y) &= \KL{\pi^*}{\pi^f}, \nonumber
    \end{align}
    which completes the proof.
\end{proof}

\subsection{Proof of Theorem \ref{prop_weak_dual_loss_func_grad}}\label{app:Proofs:prop_loss_grad}
\begin{proof}
The direct derivations for \eqref{weak_dual_loss_func_grad} read as follows:
\begin{eqnarray}
    \pdv{\theta} L(\theta) = - \varepsilon \int_{\cX} \pdv{\theta} \log Z(f_{\theta}, x) \dd \bbP(x) + \int_{\cY} \pdv{\theta} f_{\theta}(y) \dd \bbQ(y) &= \nonumber \\
    - \varepsilon \int_{\cX} \frac{1}{Z(f_{\theta}, x)} \Bigg\{\pdv{\theta} \int_{\cY} \exp\left(\frac{f_\theta(y) - c(x, y)}{\varepsilon}\right) \dd y\Bigg\} \dd \bbP(x) + \int_{\cY} \pdv{\theta} f_{\theta}(y) \dd \bbQ(y) &= \nonumber \\
    - \varepsilon \!\int_{\cX}\! \Bigg\{\frac{1}{Z(f_{\theta}, x)} \int_{\cY} \left[ \frac{\pdv{\theta} f_{\theta}(y)}{\varepsilon}\right] \exp\left(\frac{f_\theta(y) - c(x, y)}{\varepsilon}\right) \dd y \Bigg\} \dd \bbP(x) + \!\int_{\cY} \pdv{\theta} f_{\theta}(y) \dd \bbQ(y) &= \nonumber \\
    - \int_{\cX} \!\Bigg\{ \int_{\cY} \left[ \pdv{\theta} f_{\theta}(y)\right] \underbrace{\frac{1}{Z(f_{\theta}, x)} \exp\left(\frac{f_\theta(y) - c(x, y)}{\varepsilon}\right) \dd y}_{ = \dd \mu_{x}^{f_{\theta}}(y)} \Bigg\} \dd \bbP(x) + \!\int_{\cY} \pdv{\theta} f_{\theta}(y) \dd \bbQ(y) &= \nonumber \\
    - \int_{\cX} \int_{\cY} \left[ \pdv{\theta} f_{\theta}(y)\right] \dd \mu_{x}^{f_{\theta}}(y) \dd \bbP(x) + \int_{\cY} \pdv{\theta} f_{\theta}(y) \dd \bbQ(y) &\, \hspace{-8mm}, \nonumber
\end{eqnarray}
which finishes the proof.
\end{proof}

\subsection{Proof of Theorem \ref{thm:dual_slt_generalization}}\label{app:Proofs:slt_generalization_thm}

To begin with, for the ease of reading and comprehension, we recall the statistical learning setup from \S \ref{subsec:method:slt_bound}. In short, $X_N$ and $Y_M$ are empirical samples from $\bbP$ and $\bbQ$, respectively, $\cF \subset \cC(\cY)$ is a function class in which we are looking for a potential $\widehat{f}$, which optimizes \textit{the empirical weak dual objective} 
\begin{gather*}
    \widehat{F}_{C_{\EOT}}^{w}(f) = - \varepsilon \frac{1}{N} \sum_{n = 1}^{N} \log Z(f, x_n) + \frac{1}{M} \sum_{m = 1}^{M} f(y_m),
\end{gather*}
i.e., $\widehat{f} = \argmax_{f \in \cF} \widehat{F}_{C_{\EOT}}^{w}(f)$. Additionally, we introduce $f^{\cF} \defeq \argmax_{f \in \cF} F_{C_{\EOT}}^{w}(f)$. It optimizes weak dual objective \eqref{EOT_dual_final} in the function class under consideration. 
Following statistical generalization practices, our analysis utilizes Rademacher complexity. We recall the definition below.

\begin{definition}[Rademacher complexity $\cR_N( \cF, \mu)$]Let $\cF$ be a function class and $\mu$ be a distribution. The \textit{Rademacher complexity} of $\cF$ with respect to $\mu$ of sample size $N$ is
\begin{gather*}
    \cR_N( \cF, \mu) \defeq \frac{1}{N}\bbE \bigg\{ \sup_{f \in \cF} \sum_{n = 1}^{N} f(x_n) \sigma_n \bigg\},
\end{gather*}
where $\{x_n\}_{n = 1}^{N} \sim \mu$ are mutually independent, $\{\sigma_n\}_{n = 1}^N$ are mutually independent Rademacher random variables, i.e., $\text{Prob}\big(\{ \sigma_n = 1\}\big) = \text{Prob}\big(\{\sigma_n = -1\}\big) = 0.5$, and the expectation is taken with respect to both $\{x_n\}_{n = 1}^{N}$ and $\{\sigma_n\}_{n = 1}^N$.
\end{definition}

Now we are ready to verify our Theorem \ref{thm:dual_slt_generalization}.

\begin{proof}
    Our Theorem \ref{prop_primal_discrepancy_as_dual_discrepancy} yields that 
    \begin{gather*}
        \varepsilon \KL{\pi^*}{\pi^{\widehat{f}}} = F_{C_{\EOT}}^{w, *} - F_{C_{\EOT}}^w(\widehat{f}).
    \end{gather*}
    Below we upper-bound the right-hand side of the equation above.
    \begin{eqnarray}
        F_{C_{\EOT}}^{w, *} - F_{C_{\EOT}}^w(\widehat{f}) &= \nonumber \\
        F_{C_{\EOT}}^{w, *} - F_{C_{\EOT}}^w(f^{\cF}) + F_{C_{\EOT}}^w(f^{\cF}) - \widehat{F}_{C_{\EOT}}^{w}(\widehat{f}) + \widehat{F}_{C_{\EOT}}^{w}(\widehat{f}) - F_{C_{\EOT}}^w(\widehat{f}) &\leq \nonumber \\
        \big\vert F_{C_{\EOT}}^{w, *} - F_{C_{\EOT}}^w(f^{\cF}) \big\vert &+ \label{slt_gen_eq1}\\
        \big\vert F_{C_{\EOT}}^w(f^{\cF}) - \widehat{F}_{C_{\EOT}}^{w}(\widehat{f}) \big\vert &+ \label{slt_gen_eq2}\\
        \big\vert \widehat{F}_{C_{\EOT}}^{w}(\widehat{f}) - F_{C_{\EOT}}^w(\widehat{f}) \big\vert &\, \hspace{-8mm}. \label{slt_gen_eq3}
    \end{eqnarray}
    \par\textbf{Analysis of \eqref{slt_gen_eq1}}. Equation \eqref{slt_gen_eq1} relates to \textit{approximation error} and depends on the richness of class $\cF$. The detailed investigation of how could the approximation error be treated in the context of our Energy-guided EOT setup and, more generally, EBMs is an interesting avenue for future work.
    \par\textbf{Analysis of \eqref{slt_gen_eq2}}. Similar to \citep[Theorem 3.4]{taghvaei20192}, we estimate \eqref{slt_gen_eq2} using the Rademacher complexity bounds. First, we need the following technical Lemma. 
    \begin{lemma}
        For each particular samples $X_N, Y_M$, there exists $\tilde{f} \in \cF$, such that:
        \begin{gather*}
            \big\vert F_{C_{\EOT}}^w(f^{\cF}) - \widehat{F}_{C_{\EOT}}^{w}(\widehat{f})\big\vert \leq \big\vert F_{C_{\EOT}}^w(\tilde{f}) - \widehat{F}_{C_{\EOT}}^{w}(\tilde{f}) \big\vert
        \end{gather*}
        \label{app:slt_generalization_tech_lemma}
    \end{lemma}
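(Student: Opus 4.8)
The plan is to prove this by an elementary case split on the sign of $F_{C_{\EOT}}^w(f^{\cF}) - \widehat{F}_{C_{\EOT}}^{w}(\widehat{f})$, exploiting the fact that $f^{\cF}$ and $\widehat{f}$ are \emph{maximizers} over the same class $\cF$: by construction $F_{C_{\EOT}}^w(f^{\cF}) \geq F_{C_{\EOT}}^w(g)$ and $\widehat{F}_{C_{\EOT}}^{w}(\widehat{f}) \geq \widehat{F}_{C_{\EOT}}^{w}(g)$ for every $g \in \cF$. The idea is that a difference of two suprema (each attained at its own argmax) can always be controlled by a difference of the two functionals evaluated at a \emph{single} function, and the right choice of that function is dictated by which of the two optima is larger. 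Note that $\tilde{f}$ is allowed to depend on the fixed realization $X_N, Y_M$, which is exactly what the ``for each particular samples'' phrasing permits.

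First I would treat the case $F_{C_{\EOT}}^w(f^{\cF}) \geq \widehat{F}_{C_{\EOT}}^{w}(\widehat{f})$, so that the left-hand side equals $F_{C_{\EOT}}^w(f^{\cF}) - \widehat{F}_{C_{\EOT}}^{w}(\widehat{f}) \geq 0$. Applying optimality of $\widehat{f}$ in the form $\widehat{F}_{C_{\EOT}}^{w}(\widehat{f}) \geq \widehat{F}_{C_{\EOT}}^{w}(f^{\cF})$ and substituting yields $F_{C_{\EOT}}^w(f^{\cF}) - \widehat{F}_{C_{\EOT}}^{w}(\widehat{f}) \leq F_{C_{\EOT}}^w(f^{\cF}) - \widehat{F}_{C_{\EOT}}^{w}(f^{\cF})$, so choosing $\tilde{f} = f^{\cF}$ settles this case.

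Symmetrically, in the case $F_{C_{\EOT}}^w(f^{\cF}) < \widehat{F}_{C_{\EOT}}^{w}(\widehat{f})$ the left-hand side equals $\widehat{F}_{C_{\EOT}}^{w}(\widehat{f}) - F_{C_{\EOT}}^w(f^{\cF}) > 0$, and now I would invoke optimality of $f^{\cF}$ in the form $F_{C_{\EOT}}^w(f^{\cF}) \geq F_{C_{\EOT}}^w(\widehat{f})$ to obtain $\widehat{F}_{C_{\EOT}}^{w}(\widehat{f}) - F_{C_{\EOT}}^w(f^{\cF}) \leq \widehat{F}_{C_{\EOT}}^{w}(\widehat{f}) - F_{C_{\EOT}}^w(\widehat{f})$, so $\tilde{f} = \widehat{f}$ works. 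In both cases the bounding quantity $F_{C_{\EOT}}^w(\tilde{f}) - \widehat{F}_{C_{\EOT}}^{w}(\tilde{f})$ (resp.\ with signs flipped) is itself nonnegative by the chain of inequalities, hence coincides with its own absolute value, giving the claimed bound $\big\vert F_{C_{\EOT}}^w(f^{\cF}) - \widehat{F}_{C_{\EOT}}^{w}(\widehat{f})\big\vert \leq \big\vert F_{C_{\EOT}}^w(\tilde{f}) - \widehat{F}_{C_{\EOT}}^{w}(\tilde{f}) \big\vert$.

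There is no genuine obstacle here — this is the standard ``difference of optima is bounded by optimum of a cross-difference'' trick. The only point requiring minor care is the bookkeeping that in each branch the right-hand quantity is nonnegative, so that dropping it inside an absolute value is valid; this is immediate from the inequalities above but should be stated explicitly to make the two branches uniform. The real work of the overall Theorem~\ref{thm:dual_slt_generalization} lies not in this lemma but in the subsequent step, where $\bbE\,\big\vert F_{C_{\EOT}}^w(\tilde{f}) - \widehat{F}_{C_{\EOT}}^{w}(\tilde{f}) \big\vert$ is bounded via a symmetrization argument by the Rademacher complexities $\cR_N(\cF^{C_{\EOT}}, \bbP)$ and $\cR_M(\cF, \bbQ)$, since $\widehat{F}_{C_{\EOT}}^{w}$ splits into an empirical average of $-\varepsilon\log Z(f,\cdot)$ over $X_N$ and of $f$ over $Y_M$.
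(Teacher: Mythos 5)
Your proof is correct and is essentially identical to the paper's: the same case split on the sign of $F_{C_{\EOT}}^w(f^{\cF}) - \widehat{F}_{C_{\EOT}}^{w}(\widehat{f})$, the same use of optimality of $\widehat{f}$ (resp.\ $f^{\cF}$) for its own functional, and the same choices $\tilde{f} = f^{\cF}$ and $\tilde{f} = \widehat{f}$ in the two branches. The only cosmetic difference is that the paper closes each branch with the trivial bound $a \leq |a|$, so the nonnegativity bookkeeping you flag is not actually needed.
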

    \begin{proof}
        Let's consider $\big\vert F_{C_{\EOT}}^w(f^{\cF}) - \widehat{F}_{C_{\EOT}}^{w}(\widehat{f})\big\vert$. There are two possibilities.
        \vspace{-4mm}\begin{enumerate}[leftmargin=5mm]
            \item $F_{C_{\EOT}}^w(f^{\cF}) \geq \widehat{F}_{C_{\EOT}}^{w}(\widehat{f})$.
            \newline Since $\forall f \in \cF:  \widehat{F}_{C_{\EOT}}^{w}(\widehat{f}) \geq \widehat{F}_{C_{\EOT}}^{w}(f)$, and, in particular, $\widehat{F}_{C_{\EOT}}^{w}(\widehat{f}) \geq \widehat{F}_{C_{\EOT}}^{w}(f^{\cF})$, it holds:
            \begin{align*}
                \big\vert F_{C_{\EOT}}^w(f^{\cF}) - \widehat{F}_{C_{\EOT}}^{w}(\widehat{f})\big\vert = F_{C_{\EOT}}^w(f^{\cF}) - \widehat{F}_{C_{\EOT}}^{w}(\widehat{f}) &\leq \\ F_{C_{\EOT}}^w(f^{\cF}) - \widehat{F}_{C_{\EOT}}^{w}(f^{\cF}) &\leq \big\vert F_{C_{\EOT}}^w(f^{\cF}) - \widehat{F}_{C_{\EOT}}^{w}(f^{\cF}) \big\vert,
            \end{align*}
            i.e., we set $\tilde{f} \leftarrow f^{\cF}$.
            \item $\widehat{F}_{C_{\EOT}}^{w}(\widehat{f}) > F_{C_{\EOT}}^w(f^{\cF})$.
            \newline Similar to the previous case, $\forall f \in \cF:  F_{C_{\EOT}}^w(f^{\cF}) \geq F_{C_{\EOT}}^w(f) \Rightarrow F_{C_{\EOT}}^w(f^{\cF}) \geq F_{C_{\EOT}}^w(\widehat{f})$. Analogous derivations read as:
            \begin{align*}
                \big\vert F_{C_{\EOT}}^w(f^{\cF}) - \widehat{F}_{C_{\EOT}}^{w}(\widehat{f})\big\vert = \widehat{F}_{C_{\EOT}}^{w}(\widehat{f}) - F_{C_{\EOT}}^w(f^{\cF}) &\leq \\ \widehat{F}_{C_{\EOT}}^{w}(\widehat{f}) - F_{C_{\EOT}}^w(\widehat{f}) &\leq \big\vert F_{C_{\EOT}}^w(\widehat{f}) - \widehat{F}_{C_{\EOT}}^{w}(\widehat{f}) \big\vert,
            \end{align*}
            i.e., we set $\tilde{f} \leftarrow \widehat{f}$ and finish the proof of the Lemma.
        \end{enumerate}\vspace{-4mm}
    \end{proof}
    Now we continue the proof of our Theorem.
    For particular samples $X_N$ and $Y_M$, consider $\tilde{f}$ from Lemma \ref{app:slt_generalization_tech_lemma}. We derive:
    \begin{eqnarray}
        \big\vert F_{C_{\EOT}}^w(f^{\cF}) - \widehat{F}_{C_{\EOT}}^{w}(\widehat{f}) \big\vert \leq \big\vert F_{C_{\EOT}}^w(\tilde{f}) - \widehat{F}_{C_{\EOT}}^{w}(\tilde{f})\big\vert &= \nonumber \\
        \bigg\vert\!\int_{\cX} \!\!\big[\!- \!\varepsilon \log Z(\tilde{f}, x) \big] \dd \bbP(x) \!+\!\! \int_{\cY} \!\tilde{f}(y) \dd \bbQ(y) \!-\! \bigg\{ \sum_{n = 1}^{N} \!\frac{-\varepsilon \log Z(\tilde{f}, x_n)\!}{N} \!+\!\! \sum_{m = 1}^{M} \frac{\tilde{f}(y_m)}{M}  \bigg\}\bigg\vert &= \nonumber \\
        \bigg\vert\bigg\{\!\int_{\cX} \!\!\big[\!- \!\varepsilon \log Z(\tilde{f}, x) \big] \dd \bbP(x) \!-\!\! \sum_{n = 1}^{N} \!\frac{- \varepsilon \log Z(\tilde{f}, x_n)\!}{N}\bigg\} \!+\! \bigg\{\!\int_{\cY} \!\tilde{f}(y) \dd \bbQ(y) \!-\!\! \sum_{m = 1}^{M} \frac{\tilde{f}(y_m)}{M} \!\bigg\}\bigg\vert &\leq \nonumber \\
        \bigg\vert\!\int_{\cX} \!\!\big[\!- \!\varepsilon \log Z(\tilde{f}, x) \big] \dd \bbP(x) \!-\!\! \sum_{n = 1}^{N} \!\frac{- \varepsilon \log Z(\tilde{f}, x_n)\!}{N}\bigg\vert \!+\! \bigg\vert\!\int_{\cY} \!\tilde{f}(y) \dd \bbQ(y) \!-\!\! \sum_{m = 1}^{M} \frac{\tilde{f}(y_m)}{M} \!\bigg\vert &\leq\nonumber \\
        \sup\limits_{f \in \cF} \bigg\vert \!\int_{\cX} f^{C_\EOT}(x) \dd \bbP(x) -\! \sum_{n = 1}^{N} \!\frac{f^{C_\EOT}(x_n)}{N} \bigg\vert + \sup\limits_{f \in \cF} \bigg\vert\!\int_{\cY} f(y) \dd \bbQ(y) -\! \sum_{m = 1}^{M} \frac{f(y_m)}{M} \!\bigg\vert &= \nonumber \\
        \sup\limits_{h \in \cF^{C_{\EOT}}} \bigg\vert \!\int_{\cX} h(x) \dd \bbP(x) -\! \sum_{n = 1}^{N} \!\frac{h(x_n)}{N} \bigg\vert + \sup\limits_{f \in \cF} \bigg\vert\!\int_{\cY} f(y) \dd \bbQ(y) -\! \sum_{m = 1}^{M} \frac{f(y_m)}{M} \!\bigg\vert&\, \hspace{-8mm}. \nonumber
    \end{eqnarray}
    Recall that $\cF^{C_{\EOT}} = \{f^{C_\EOT} : f \in \cF\} = \{- \varepsilon \log Z(f, \cdot) : f \in \cF\}$. Thanks to well-known Rademacher bound \citep[Lemma 26.2]{shalev2014understanding}, it holds:
    \begin{align*}
    \bbE \, \Bigg\{\sup\limits_{h \in \cF^{C_{\EOT}}} \bigg\vert \!\int_{\cX} h(x) \dd \bbP(x) -\! \sum_{n = 1}^{N} \!\frac{h(x_n)}{N} \bigg\vert \Bigg\} &\leq 2 \cR_N(\cF^{C_{\EOT}}, \bbP), \\
    \bbE \, \Bigg\{ \sup\limits_{f \in \cF} \bigg\vert\!\int_{\cY} f(y) \dd \bbQ(y) -\! \sum_{m = 1}^{M} \frac{f(y_m)}{M} \!\bigg\vert \Bigg\} &\leq 2 \cR_M(\cF, \bbQ),
    \end{align*}
    where the expectations are taken with respect to samples $X_N$ and $Y_M$. Combining the results above, we conclude:
    \begin{gather}
        \bbE \big\vert F_{C_{\EOT}}^w(f^{\cF}) - \widehat{F}_{C_{\EOT}}^{w}(\widehat{f}) \big\vert \leq 2 \cR_N(\cF^{C_{\EOT}}, \bbP) + 2 \cR_M(\cF, \bbQ). \label{slt_gen_eq2_rademacher}
    \end{gather}
    \par\textbf{Analysis of \eqref{slt_gen_eq3}}. Similar to the previous case, we obtain the inequality:
    \begin{eqnarray}
        \big\vert \widehat{F}_{C_{\EOT}}^{w}(\widehat{f}) - F_{C_{\EOT}}^w(\widehat{f}) \big\vert &\leq \nonumber \\
        \sup\limits_{h \in \cF^{C_{\EOT}}} \bigg\vert \!\int_{\cX} h(x) \dd \bbP(x) -\! \sum_{n = 1}^{N} \!\frac{h(x_n)}{N} \bigg\vert + \sup\limits_{f \in \cF} \bigg\vert\!\int_{\cY} f(y) \dd \bbQ(y) -\! \sum_{m = 1}^{M} \frac{f(y_m)}{M} \!\bigg\vert&\, \hspace{-8mm}. \nonumber
    \end{eqnarray}
    Therefore, 
    \begin{gather}
        \bbE \big\vert \widehat{F}_{C_{\EOT}}^{w}(\widehat{f}) - F_{C_{\EOT}}^w(\widehat{f}) \big\vert \leq 2 \cR_N(\cF^{C_{\EOT}}, \bbP) + 2 \cR_M(\cF, \bbQ). \label{slt_gen_eq3_rademacher}
    \end{gather}
    By gathering equations \mref{slt_gen_eq1, slt_gen_eq2_rademacher, slt_gen_eq3_rademacher}, we prove the theorem:
    \begin{eqnarray}
        \bbE \big[ \KL{\pi^*}{\pi^{\widehat{f}}} \big] = \varepsilon^{-1} \bbE \Big\{ F_{C_{\EOT}}^{w, *} - F_{C_{\EOT}}^w(\widehat{f}) \Big\} &\leq \nonumber \\
        \varepsilon^{-1}\big\vert F_{C_{\EOT}}^{w, *} - F_{C_{\EOT}}^w(f^{\cF}) \big\vert + \varepsilon^{-1}\bbE \big\vert F_{C_{\EOT}}^w(f^{\cF}) - \widehat{F}_{C_{\EOT}}^{w}(\widehat{f}) \big\vert + \varepsilon^{-1}\bbE \big\vert \widehat{F}_{C_{\EOT}}^{w}(\widehat{f}) - F_{C_{\EOT}}^w(\widehat{f}) \big\vert &\leq \nonumber \\
        \varepsilon^{-1}\big\vert F_{C_{\EOT}}^{w, *} - F_{C_{\EOT}}^w(f^{\cF}) \big\vert + \varepsilon^{-1} \big[4 \cR_N(\cF^{C_{\EOT}}, \bbP) + 4 \cR_M(\cF, \bbQ) \big]&\,\hspace{-8mm}. \nonumber
    \end{eqnarray}
\end{proof}

\section{Extended Experiments}\label{app:extended-exps}

\subsection{Colored MNIST}\label{app:extexp:cmnist}

In this subsection, we consider Colored MNIST \citep[\wasyparagraph 5.3]{gushchin2023entropic}. Following \citep{gushchin2023entropic}, we set source and target distributions $\bbP$ and $\bbQ$ to be colored handwritten images of digits ``2'' and ``3'' accordingly. The entropic regularization coefficients are in range $\varepsilon = 0.01, 0.1, 1$.

\begin{wraptable}{r}{0.45\linewidth}
\centering
\begin{tabular}{cccc} 
\hline
 $\epsilon$ & $0.01$ & $0.1$ & $1$   \\
 \hline
 LPIPS (VGG) & $0.043$ & $0.063$ & $0.11$ \\ 
 \hline
\end{tabular}
\captionsetup{justification=centering}
 \caption{LPIPS variance of \textbf{Our} method for ColoredMNIST ``2''$\rightarrow$``3'' transfer.}
 \label{tbl:lpips-cmnist}
 \vspace{-3mm}
\end{wraptable}

The qualitative results of learning our model directly in the data space are presented in Figure \ref{fig:cmnist}. As we can see, our learned EOT plans successfully preserve color and geometry of the transformed images. Generated images (Figures \ref{fig:cm:eps_001}, \ref{fig:cm:eps_01}, \ref{fig:cm:eps_1}) are slightly noised since we add noise to target images when training for stability. For quantitative analysis and comparison with competitive methods, we borrow the results on the ColoredMNIST transfer problem for \citep{bortoli2021diffusion}, \citep{daniels2021score}, \citep{gushchin2023entropic} from \citep{gushchin2023entropic}. Additionally, we run the code for the recent \citep{shi2023diffusion} on our own. The methods generally work for different $\varepsilon$ due to their principles, and we choose $\varepsilon = 1$ as an admissible entropic regularization power for all methods except \citep{daniels2021score} which struggles for small $\varepsilon$, see the discussion in \S~\ref{subsec:related_works_OT}. For it, we choose $\varepsilon = 25$. The obtained FID metrics are reported in Table~\ref{tbl:fid-cmnist}. For the qualitative performance of baselines, see \citep[Fig. 2]{gushchin2023entropic}. Besides, we provide Table~\ref{tbl:lpips-cmnist}  with LPIPS metric to show that the diversity of our method increases with $\varepsilon$.

\vspace{-1mm}\begin{table}[!h]
\centering
\footnotesize
\begin{tabular}{ ccccccc } 
\hline
 Method & \textbf{Ours} & {\scriptsize \citep{bortoli2021diffusion}} & {\scriptsize \citep{shi2023diffusion}} & {\scriptsize \citep{gushchin2023entropic}} & {\scriptsize \citep{daniels2021score}} \\ 
 \hline
 FID & $109$& $93$ & $91.3$ &  $6.3$ & $14.7 (\varepsilon = 25)$ \\ 
 \hline
\end{tabular}
\captionsetup{justification=centering}
 \vspace{-1mm}\caption{Baselines FID for ColoredMNIST ``2''$\rightarrow$``3'' transfer, $\varepsilon = 1$}
 \label{tbl:fid-cmnist}
 \vspace{-2mm}
\end{table}

\begin{myenvironment}
We honestly state that the FID of our approach is not good. One reason is that the default Langevin dynamic produces slightly noisy samples. FID is known to terribly react to any noise. Secondly, we emphasize that we adapt the simplest long-run EBMs with persistent replay buffer \citep{nijkamp2020anatomy} for the experiment, see Appendix~\ref{app:Exp:cmnist} for the details. We leave the applications of modern EBMs which can generate sharp data \citep{du2019implicit, du2021improved} for future research.\end{myenvironment}

\begin{figure}[h]
\hspace{-3mm}\begin{subfigure}[b]{0.1\linewidth}
\centering
\includegraphics[width=0.465\linewidth]{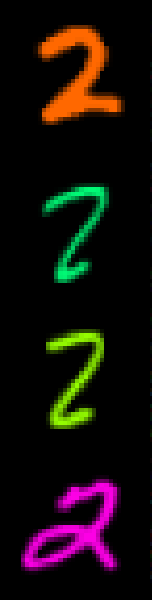}
\vspace{0mm}\caption{\centering $x \sim \bbP$}
\label{fig:cm:source}
\end{subfigure}
\begin{subfigure}[b]{0.3\linewidth}
\centering
\includegraphics[width=0.87\linewidth]{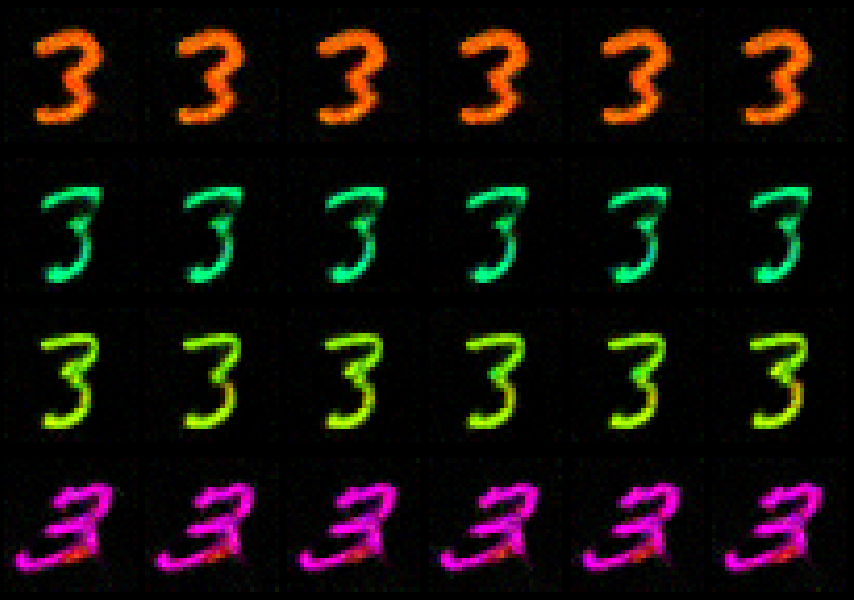}
\vspace{0mm}\caption{\centering $y \sim \hat{\pi}(\cdot \vert x)$ , $\varepsilon = 0.01$}
\label{fig:cm:eps_001}
\end{subfigure}
\begin{subfigure}[b]{0.3\linewidth}
\centering
\includegraphics[width=0.87\linewidth]{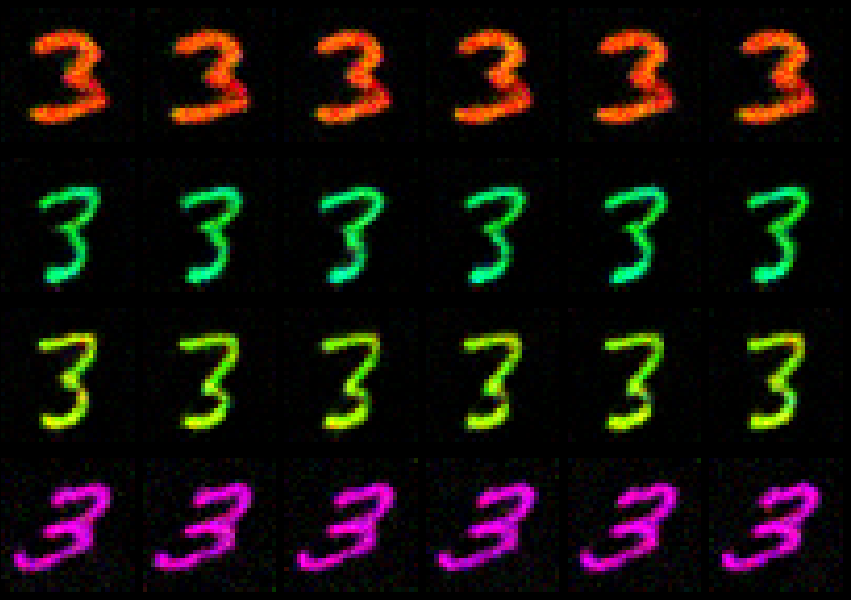}
\vspace{0mm}\caption{\centering $y \sim \hat{\pi}(\cdot \vert x)$ , $\varepsilon = 0.1$}
\label{fig:cm:eps_01}
\end{subfigure}
\begin{subfigure}[b]{0.3\linewidth}
\centering
\includegraphics[width=0.87\linewidth]{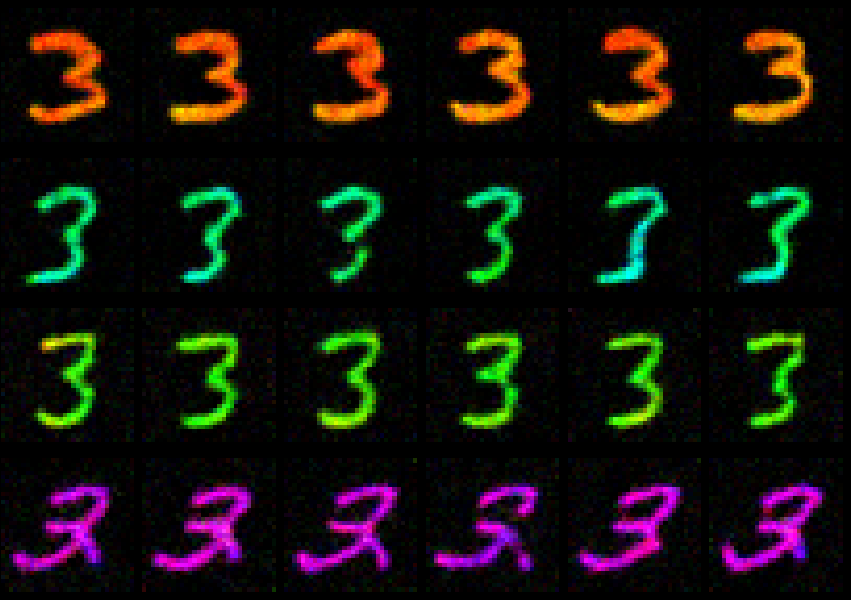}
\vspace{0mm}\caption{\centering $y \sim \hat{\pi}(\cdot \vert x)$ , $\varepsilon = 1$}
\label{fig:cm:eps_1}
\end{subfigure}
\vspace{-4mm}
\caption{Quantitative performance of Energy-guided EOT on Colored MNIST.}
\label{fig:cmnist}
\vspace{-3mm}
\end{figure}

\subsection{Extended High-Dimensional Unpaired Image-to-image translation}\label{app:extexp:afhq}

\begin{figure}[!h]
\hspace{-2mm}\begin{subfigure}[b]{0.0684\linewidth}
\centering
\includegraphics[width=1.0\linewidth]{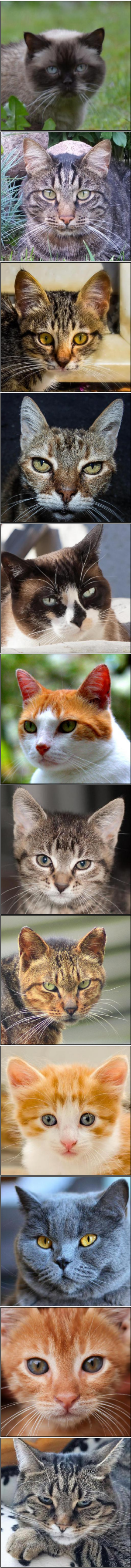}
\vspace{2.6mm}
\label{fig:c2d_src_uncur}
\vspace{-5mm}
\end{subfigure}
\hspace{1mm}\begin{subfigure}[b]{0.41\linewidth}
\centering
\includegraphics[width=1.0\linewidth]{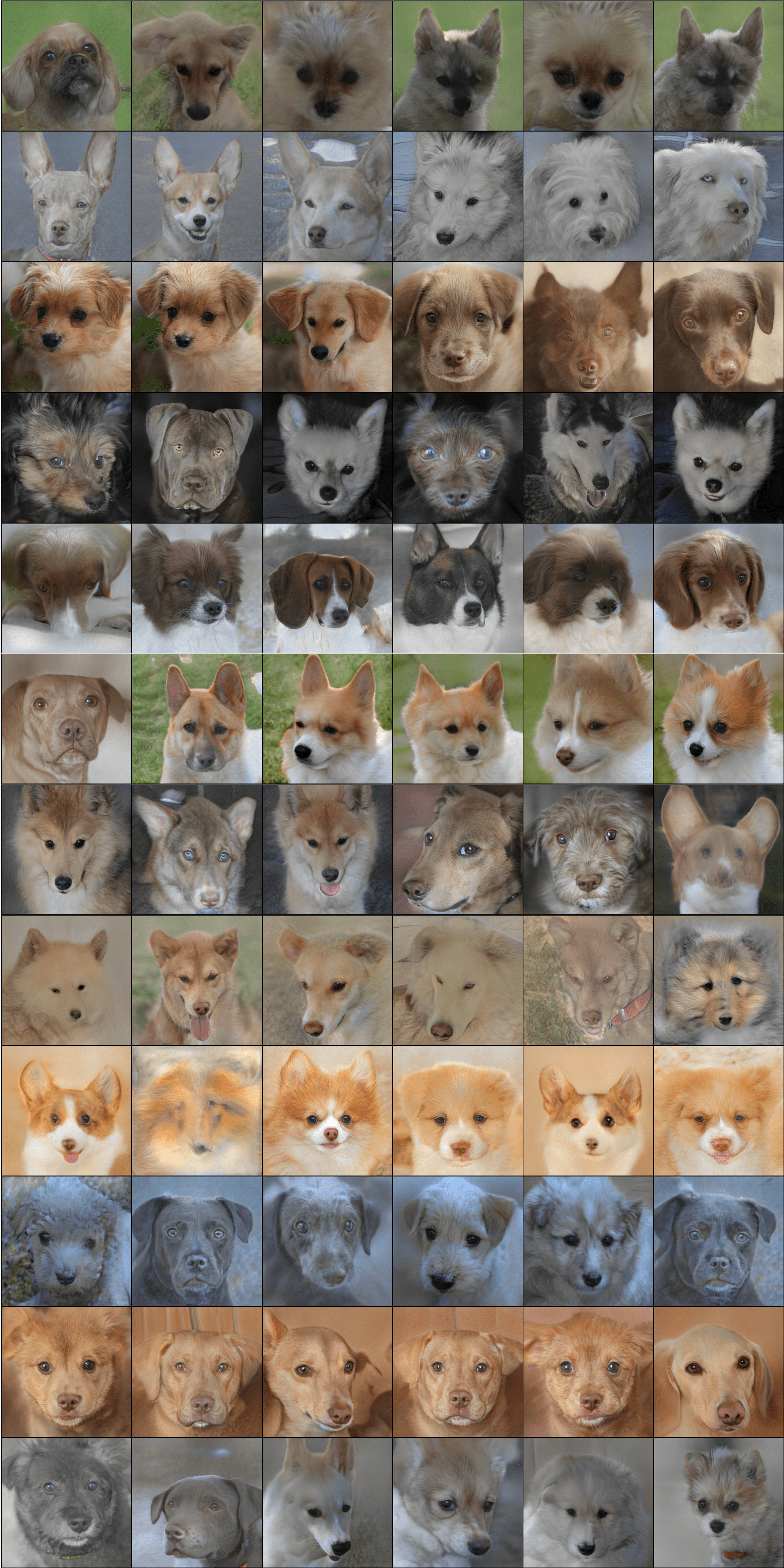}
\vspace{-3mm}
\caption{\centering $512\!\times\!512$ AFHQ Cat$\rightarrow$Dog}
\label{fig:c2d_targ_uncur}
\vspace{-5mm}
\end{subfigure}
\hspace{1mm}
\hbox{\vrule height 11.8cm}
\hspace{1mm}
\hspace{0mm}\begin{subfigure}[b]{0.0684\linewidth}
\centering
\includegraphics[width=1.0\linewidth]{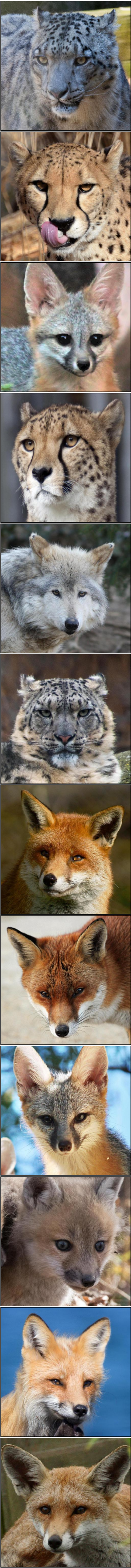}
\vspace{2.6mm}
\label{fig:w2d_src_uncur}
\vspace{-5mm}
\end{subfigure}
\hspace{1mm}\begin{subfigure}[b]{0.41\linewidth}
\centering
\includegraphics[width=1.0\linewidth]{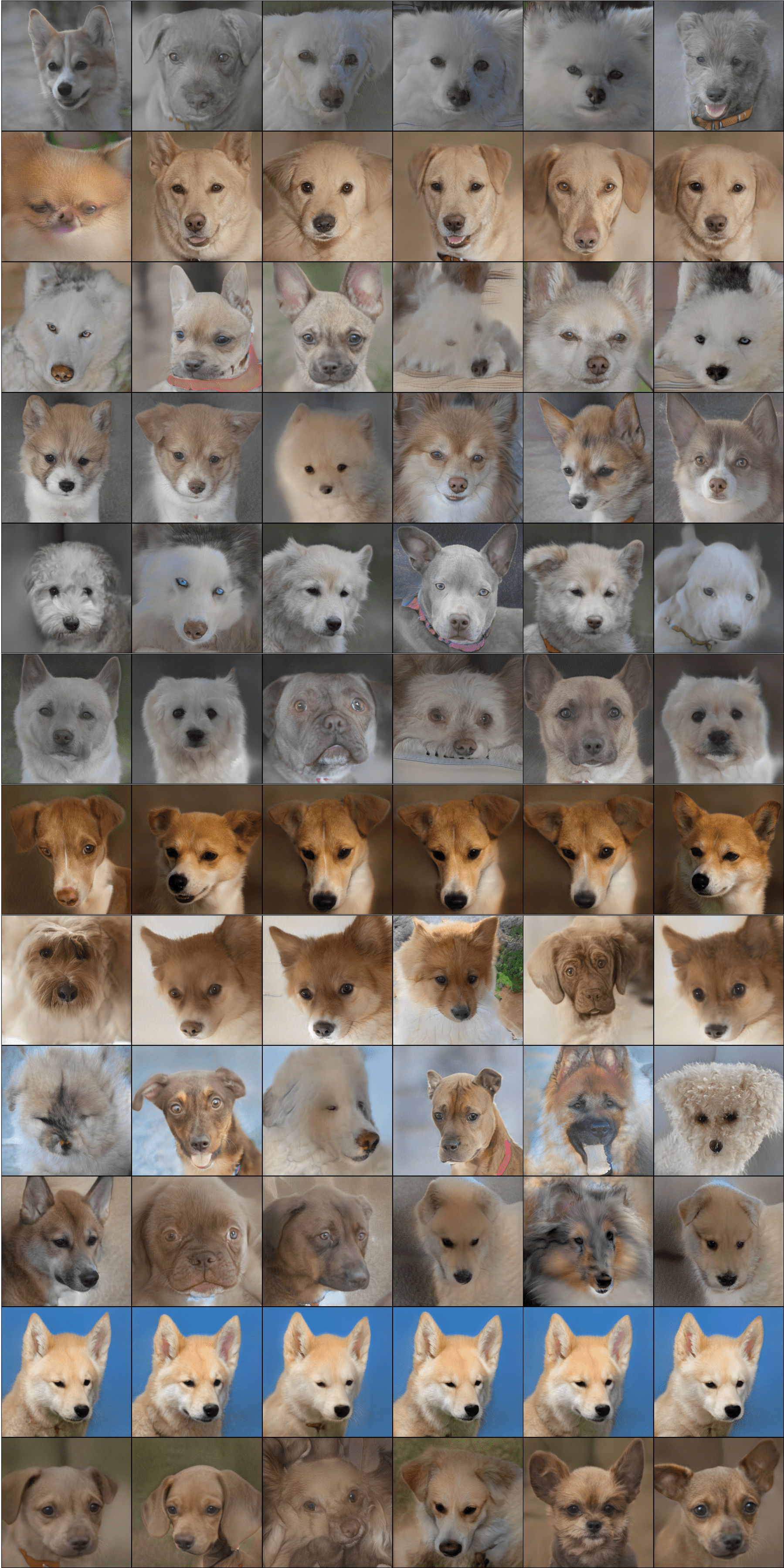}
\vspace{-3mm}
\caption{\centering $512\!\times\!512$ AFHQ Wild$\rightarrow$Dog}
\label{fig:w2d_targ_uncur}
\vspace{-5mm}
\end{subfigure}
\vspace{-0mm}\caption{\centering Uncurated Image-to-Image translation by \textbf{our} method in the latent space of StyleGAN.}
\label{fig:afhq_uncur}
\vspace{-3mm}
\end{figure}

\begin{figure}[!h]
\hspace{-1mm}\begin{subfigure}[b]{0.0938\linewidth}
\centering
\includegraphics[width=1.0\linewidth]{ims/teaser_source2.png}
\vspace{2.82mm}
\label{fig:egsde_source}
\vspace{-5mm}
\end{subfigure}
\hspace{1mm}\begin{subfigure}[b]{0.375\linewidth}
\centering
\includegraphics[width=1.0\linewidth]{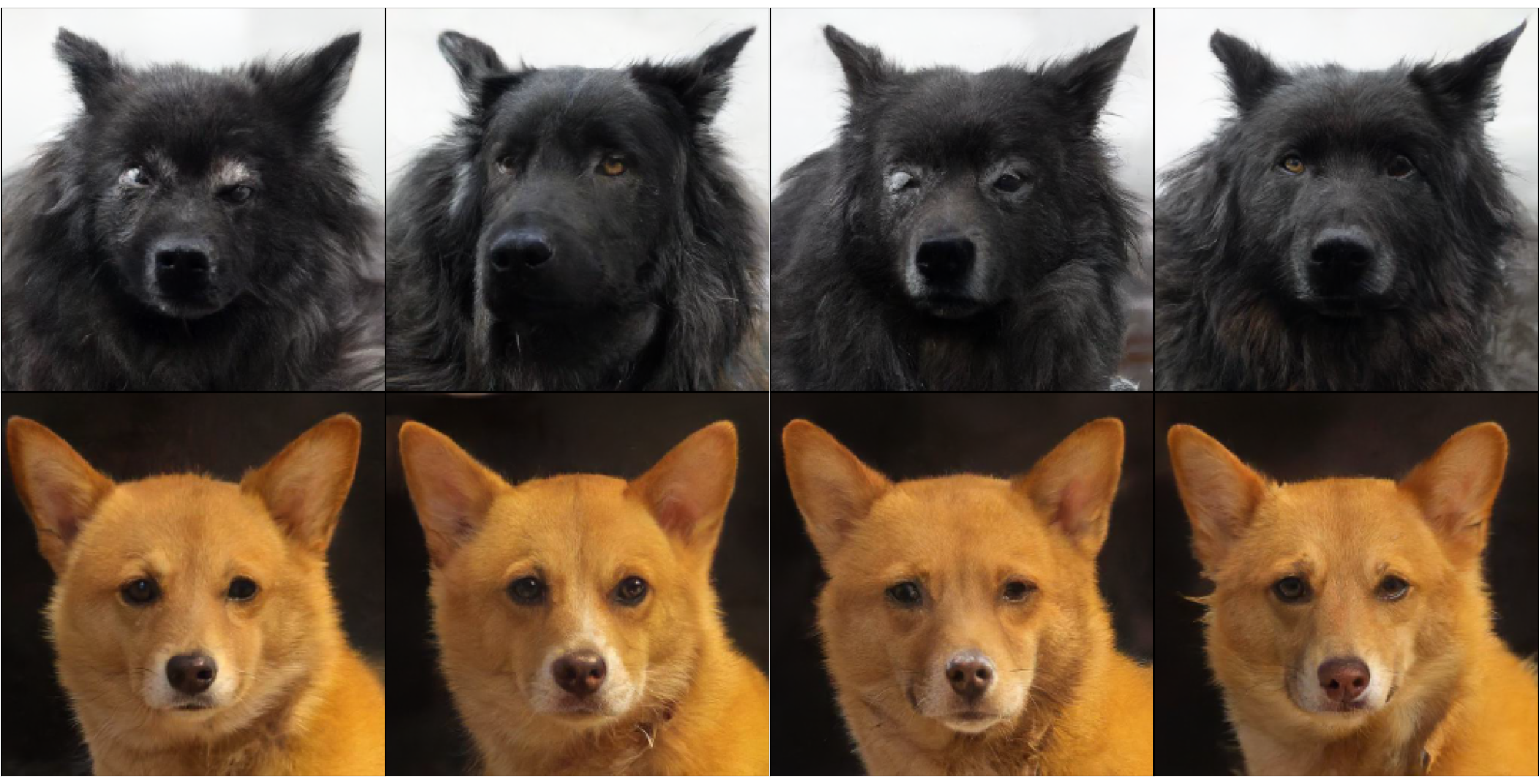}
\vspace{-3mm}
\caption{\centering $256\times256$ samples obtained with \citep{zhao2022egsde}}
\label{fig:egsde_target}
\vspace{-8.5mm}
\end{subfigure}
\hspace{1mm}
\hbox{\vrule height 3cm}
\hspace{1mm}
\hspace{0mm}\begin{subfigure}[b]{0.0938\linewidth}
\centering
\includegraphics[width=1.0\linewidth]{ims/teaser_source2.png}
\vspace{2.82mm}
\label{fig:scones_source}
\vspace{-5mm}
\end{subfigure}
\hspace{1mm}\begin{subfigure}[b]{0.375\linewidth}
\centering
\includegraphics[width=1.0\linewidth]{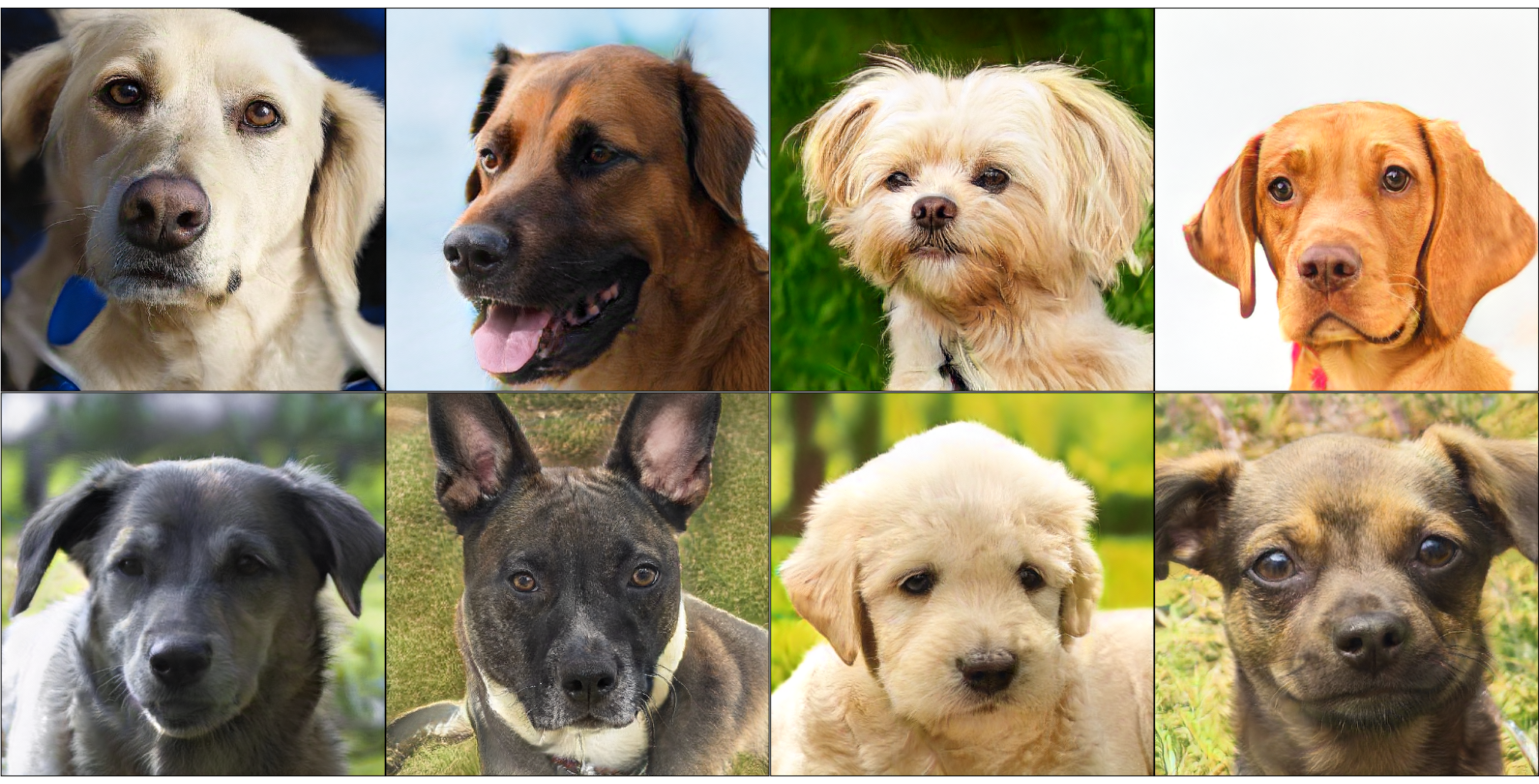}
\vspace{-3mm}
\caption{\centering $512\times512$ samples obtained with \citep{daniels2021score}, $\varepsilon = 10000$}
\label{fig:scones_target}
\vspace{-8.5mm}
\end{subfigure}
\vspace{3mm}\caption{\centering Image-to-Image Cat$\rightarrow$Dog translation by alternative methods}
\label{fig:afhq_alternatives}
\vspace{-3mm}
\end{figure}

In this section, we provide additional quantitative results and comparisons for our considered high-dimensional I2I setup. In Table~\ref{fig:afhq_uncur}, we show \textbf{uncurated} samples from our approach learned on $512\times512$ AFHQ Cat$\rightarrow$Dog and Wild$\rightarrow$Dog image transfer problems. To compare our visual results with alternatives, we demonstrate the pictures generated by \citep{zhao2022egsde} and \citep{daniels2021score} solvers, see Figure~\ref{fig:afhq_alternatives}. The former demonstrates SOTA results, see Table~\ref{tbl:afhq-fid}, but has no relation to OT. The latter is the closest approach to ours. For \citep{daniels2021score}, we trained their algorithm in the same setup as we used, with the latent space of the StylaGAN and transport cost $c(x, y) = \frac{1}{2}\Vert x - G(z) \Vert_2^2$, see \S~\ref{sec-exp-unpaired-i2i}. We found that their method works only for $\varepsilon = 10000$ yielding unconditional generation. It is in concordance with our findings about the approach, see discussion in \S~\ref{subsec:related_works_OT}, and the insights from the original work, see \citep[\S 5.1]{daniels2021score}.

\vspace{-2mm}\section{Experimental Details}\label{app:Experimenta_details}\vspace{-2mm}

\textbf{General Details.} 
For the first two experiments, we take the advantage of replay buffer $\mathcal{B}$ constructed as described in \citep{du2019implicit}. When training, the ULA algorithm is initialized by samples from $\mathcal{B}$ with probability $p=0.95$ and from Gaussian noise with probability $1-p=0.05$.  For the last two image-data experiments, we also use a similar replay buffer but with $p = 1$, i.e., we do not update $\mathcal{B}$ with short-run samples.

\vspace{-2mm}\subsection{Toy 2D details}\label{app:Exp:toy_2d}\vspace{-2mm}

We parameterize the potential $f_{\theta}$ as MLP with two hidden layers and \textit{LeakyReLU}(\texttt{negative\_slope}$=0.2$) as the activation function. Each hidden layer has 256 neurons. The hyperparameters of Algorithm \ref{algorithm-main} are as follows: $K = 100, \sigma_0 = 1, N=1024$, see the meaning of each particular variable in the algorithm listing. The Langevin discretization steps are $\eta = 0.05$ for $\varepsilon = 0.1$ and $\eta = 0.005$ for $\varepsilon = 0.001$. The reported numbers are chosen for reasons of the training stability. 

\textbf{Computation complexity}. The experiment was conducted on a single GTX 1080 Ti and took approximately two hours for each entropy regularization parameter value.

\vspace{-2mm}\subsection{Gaussian-to-Gaussian details}\label{app:Exp:g2g}\vspace{-1mm}

For the source and target distributions, we choose $\bbP = \cN(0, \Sigma_X)$ and $\bbQ = \cN(0, \Sigma_Y)$ with $\Sigma_X$ and $\Sigma_Y$ chosen at random. For the reproducibility, these parameters are provided in the code.

The details of baseline methods (Table \ref{tbl:g2g}) are given in Appendix \ref{baseline-details}. For each particular parameter set $\varepsilon, D$, our trained potential $f_\theta$ is given by MLP with three hidden layers, 512 neurons each, and $\textit{ReLU}$ activation. The same architectural setup is chosen for $\lfloor$Gushchin et. al.$\rceil$ \citep{gushchin2023entropic} for fair comparison. The hyperparameters of Algorithm \ref{algorithm-main} are the same for each $\varepsilon, D$: $K = 100, \sigma_0 = 1, N=1024, \eta = 0.1$. At the inference stage, we run ULA steps for $K_{\text{test}} = 700$ iterations. The only parameter which we choose to be dependent on $\varepsilon, D$ is the learning rate. It affects the stability of the training. The particular learning rates which are used in our experiments are given in Table \ref{table-lg-g2g}. More specific training peculiarities could be found in our code. 

\textbf{Bures-Wasserstein UVP metric}. The B$\cW^2_2$-UVP metric \citep[Eq. 18]{korotin2021continuous} is the Wasserstein-2 distance between distributions $\pi_1$ and $\pi_2$ that are coarsened to Gaussians and normalised by the variance of distribution $\pi_2$:
\begin{gather*}
    \text{B}\cW_2^2\!\text{-UVP}(\pi_1, \pi_2) \defeq \frac{100 \%}{\frac{1}{2}\text{Var}(\pi_2)} \cW_2^2\big(\cN(\mu_{\pi_1}, \Sigma_{\pi_1}), \cN(\mu_{\pi_2}, \Sigma_{\pi_2})\big).
\end{gather*}
In our experiment, $\pi_2$ is the optimal plan $\pi^*$ which is known to be Gaussian, and $\pi_1$ is the learned plan $\hat{\pi}$, whose mean and covariance are estimated by samples.

\textbf{Computation complexity}. Each experiment with particular $\varepsilon, D$ takes approximately 12 hours on a single GTX 1080 Ti.

\begin{table}[!h]
\scriptsize
  \centering
  \begin{tabular}{C{4mm}C{6mm}C{6mm}C{6mm}C{6mm}C{6mm}C{6mm}C{6mm}C{6mm}C{6mm}C{6mm}C{6mm}C{6mm}}
    \hline
    $D$ & \multicolumn{3}{C{6mm}}{\hspace*{12.6mm}2} & \multicolumn{3}{C{6mm}}{\hspace*{12.1mm}16} & \multicolumn{3}{C{6mm}}{\hspace*{12.1mm}64} & \multicolumn{3}{C{6mm}}{\hspace*{11.6mm}128} \\ 
      $\varepsilon$ & 0.1 & 1 & 10 & 0.1 & 1 & 10 & 0.1 & 1 & 10 & 0.1 & 1 & 10 \\
      \hline
      \rule{0mm}{2.5mm}lr & $5\cdot 10^{-7}$ & $4\cdot 10^{-7}$ & $2\cdot 10^{-7}$ & $2\cdot 10^{-5}$ & $4\cdot 10^{-6}$ & $1\cdot 10^{-5}$ & $7\cdot 10^{-5}$ & $4\cdot 10^{-5}$ & $2\cdot 10^{-5}$ & $2\cdot 10^{-4}$ & $5\cdot 10^{-5}$ & $5\cdot 10^{-5}$ \\
    \hline
  \end{tabular}
\vspace{2mm}\caption{Learning rates for Gaussian-to-Gaussian experiment; $\varepsilon = 0.1, 1, 10$ and $D = 2, 16, 64, 128$. }
  \label{table-lg-g2g}
  \vspace{-3mm}
\end{table} 

\vspace{-2mm}\subsection{High-Dimensional Unpaired Image-to-image Translation details}\vspace{-1mm}

\textbf{General details.} In this experiment, we learn EOT between a source distribution of images $\bbP \in \cP(\bbR^{3\times512\times512})$ and $\bbQ = \cN(0, I_{512}) \in \cP(\bbR^{512})$, which is the latent distribution of the pretrained StyleGAN $G$. We use non-euclidean cost $c(x, y) = \frac{1}{2}\Vert x - G(z) \Vert_2^2$. Below we describe the primary idea behind this choice. Consider the pushforward distribution $\bbQ^{\text{ambi}} \defeq G_{\sharp}\bbQ$. In our case, it is the parametric distribution of AFHQ Dogs. Thanks to our specific cost function, the learned optimal conditional plans $\pi^{\hat{f}}(\cdot | x)$ between $\bbP$ and $\bbQ$ help to approximate (given $\varepsilon$ is sufficiently small) the \textit{standard Euclidean} Optimal Transport between $\bbP$ and $\bbQ^{\text{ambi}}$, which is the motivating problem of several OT researches \citep{makkuva2020optimal, korotin2021wasserstein}. The corresponding (stochastic) mapping is given by pushforward distributions $G_{\sharp}\pi^{\hat{f}}(\cdot | x)$. In practice, we sample from $\pi^{\hat{f}}(\cdot | x)$ using our cost-guided MCMC and then pass the obtained samples through $G$. Note that our setup seems to be the first theoretically-advised attempt to leverage $\bbW_2^2$ OT between $512\times512$ images.

\textbf{Technical details}. The AFHQ dataset is taken from the StarGAN v2 \citep{choi2020stargan} github:
\vspace{-3mm}
\begin{center}
\url{https://github.com/clovaai/stargan-v2}.
\end{center}
The dataset includes three groups of high-quality $512\times512$ images: Dogs, Cats and Wilds (wildlife animals). The latter two groups are used as the source distributions $\bbP$. The pretrained (on AFHQ Dogs) StyleGAN2-ADA \cite{karras2020training} is taken from the official PyTorch implementation:
\begin{center}
\url{https://github.com/NVlabs/stylegan2-ada-pytorch}.
\end{center}
As a potential $f_\theta$ which operates in the $512$-dimensional latent space of the StyleGAN model, we choose fully-connected MLP with \textit{ReLU} activations and three hidden layers with $1024, 512$ and $256$ neurons, accordingly. The training hyperparameters are: $K = 100, \sqrt{\eta} = 0.008, \sigma_0 = 1.0, N = 128$. For both Cat$\rightarrow$Dog and Wild$\rightarrow$Dog experiments, we train our model for $11$ epochs with a learning rate $10^{-4}$ which starts monotonically decreasing to $10^{-5}$ after the fifth epoch.  At inference, we initialize the sampling procedure (in the latent space) with standard Normal noise. Then we repeat Langevin steps for $K_{\text{test}}^{\text{init}} = 1000$ iterations with $\sqrt{\eta} = 0.008$. After that, additional $K_{\text{test}}^{\text{refine}} = 1000$ steps are performed with $\sqrt{\eta} = 0.0005$. The obtained latent codes are then passed through the StyleGAN generator, yielding the images from the AFHQ Dog dataset.

\textbf{Computation complexity}. The training takes approximately a day on 4 A100 GPUs. The inference (as described above) takes about 20 minutes per batch on a single A100 GPU.
\vspace{-2mm}
\subsection{Colored MNIST details}\label{app:Exp:cmnist}\vspace{-1mm}

For generating Colored MNIST dataset, we make use of the code, provided by the authors of \citep{gushchin2023entropic}. The dataset consists of colored images of digit ``2'' ($\approx$ 7K) and colored images of digit ``3'' ($\approx$ 7K). The images are scaled to resolution $32 \times 32$.

To solve the problem in view, we adapt the base EBM  code from \citep{nijkamp2020anatomy}:
\begin{center}
\url{https://github.com/point0bar1/ebm-anatomy}.
\end{center}
We do nothing but embed the cost function gradient's computation when performing Langevin steps, leaving all the remaining technical details unchanged. In particular, we utilize simple CNNs with \textit{LeakyReLU}(\texttt{negative\_slope}$=0.05$) activations as our learned potential $f_\theta$. We pick batch size $N = 256$ and initialize the persistent replay buffer at random using \textit{Uniform}$[-1, 1]^{3 \times 32 \times 32}$ distribution. We use \textit{Adam} optimizer with learning rate gradually decreasing from $3 \cdot 10^{-5}$ to $10^{-5}$. The reported images \ref{fig:cmnist} correspond to approximately $7000$ training iterations.

For each entropic coefficient $\varepsilon = 0.01, 0.1, 1$, we run 6 experiments with the parameters given in Table \ref{table-params-cm}. For training stability, we add Gaussian noise $\cN(0, 9 \cdot \eta)$ to target samples when computing loss estimate $\hat{L}$ in Algorithm \ref{algorithm-main}.

\begin{table}[!h]
\scriptsize
  \centering
  \begin{tabular}{cccc}
  \hline
  $\varepsilon$ & 0.01 & 0.1 & 1 \\
      \hline
      $K \in$ & $\{500, 1000, 2000\}$ & $\{500, 1000, 2000\}$ & $\{500, 1000, 2000\}$ \\
      $\sqrt{\eta} \in $ & $\{0.1, 0.3\}$ & $\{0.1, 0.3\}$ & $\{0.2, 0.3\}$ \\
    \hline
  \end{tabular}
\vspace{0mm}\caption{Training parameters for Colored MNIST; $\varepsilon = 0.01, 0.1, 1$. }
  \label{table-params-cm}
  \vspace{-2mm}
\end{table} 

At the inference stage, we initialize the MCMC chains with source data samples. The ULA steps are repeated for $K_{\text{test}} = 2000$ iterations with the same Langevin discretization step size $\eta$ as the one used at the training stage. For each $\varepsilon$, the reported images \ref{fig:cmnist} are picked for those parameters set $K, \eta$, which we found to be the best in terms of qualitative performance.

For LPIPS calculation, we use the official code:
\begin{center}
\url{https://github.com/richzhang/PerceptualSimilarity},
\end{center}
where we pick VGG backbone for calculating lpips features. To calculate the resulting metric, we sample 18 target images from $\pi^{\hat{f}}(\cdot \vert x)$ for each test source image $x$. For every pair of these 18 images, we compute LPIPS and report the average value (along the generated target images and source ones).

\textbf{Computation complexity}. It takes approximately one day on one V100 GPU to complete an image data experiment for each set of parameters.

\vspace{-2mm}
\section{Details of the baseline methods}\label{baseline-details}\vspace{-2mm}

In this section, we discuss details of the baseline methods with which we compare our method on the Gaussian-to-Gaussian transformation problem.

\textbf{$\lfloor$Daniels et.al.$\rceil$} \citep{daniels2021score}. We use the code from the authors' repository 
\begin{center}
\url{https://github.com/mdnls/scones-synthetic}
\end{center}
for their evaluation in the Gaussian case. We employ their configuration \texttt{blob/main/config.py}. 

\textbf{$\lfloor$Seguy et.al.$\rceil$} \citep{seguy2018large}. We use the part of the code of SCONES corresponding to learning dual OT potentials \texttt{blob/main/cpat.py} and the barycentric projection \texttt{blob/main/bproj.py} in the Gaussian case with configuration \texttt{blob/main/config.py}.

\textbf{$\lfloor$Chen et.al.$\rceil$ \textit{(Joint)}} \citep{chen2022likelihood}. We utilize the official code from 
\begin{center}
\url{https://github.com/ghliu/SB-FBSDE}
\end{center}
with their configuration \texttt{blob/main/configs/default\_checkerboard\_config.py} for the checkerboard-to-noise toy experiment, changing the number of steps of dynamics from 100 to 200 steps. Since their hyper-parameters are developed for their 2-dimensional experiments, we increase the number of iterations for dimensions 16, 64 and 128 to 15 000.

\textbf{$\lfloor$Chen et.al.$\rceil$ \textit{(Alt)}} \citep{chen2022likelihood}. We also take the code from the same repository as above. We base our configuration on the authors' one (\texttt{blob/main/configs/default\_moon\_to\_spiral\_config.py}) for the moon-to-spiral experiment. As earlier, we increase the number of steps of dynamics up to 200. Also, we change the number of training epochs for dimensions 16, 64 and 128 to 2,4 and 8 correspondingly.

\textbf{$\lfloor$De Bortoli et.al.$\rceil$} \citep{bortoli2021diffusion}. We utilize  the official code from 
\begin{center}
    \url{https://github.com/JTT94/diffusion_schrodinger_bridge}
\end{center}
with their configuration \texttt{blob/main/conf/dataset/2d.yaml} for toy problems. We increase the amount of steps of dynamics to 200 and the number of steps of IPF procedure for dimensions 16, 64 and 128 to 30, 40 and 60, respectively. 

\textbf{$\lfloor$Vargas et.al.$\rceil$} \citep{vargas2021solving}. We use the official code from 
\begin{center}
\url{https://github.com/franciscovargas/GP_Sinkhorn} 
\end{center}
with hyper-parameters from \texttt{blob/main/notebooks/2D Toy Data/2d\_examples.ipynb}. We set the number of steps to 200. As earlier, we increase the number of steps of IPF procedure for dimensions 16, 64 and 128 to 1000, 3500 and 5000, respectively. 

\textbf{$\lfloor$Vargas et.al.$\rceil$} \citep{vargas2021solving}. We tested the official code from 
\begin{center}
\url{https://github.com/franciscovargas/GP_Sinkhorn} 
\end{center}
Instead of Gaussian processes, we used a neural network as for $\lfloor\textbf{ENOT}\rceil$. We use $N=200$ discretization steps as for other SB solvers, $5000$ IPF iterations, and $512$ samples from distributions $\mathbb{P}_0$ and $\mathbb{P}_1$ in each of them. We use the Adam optimizer with $lr=10^{-4}$ for optimization.

\textbf{$\lfloor$Gushchin et.al.$\rceil$} \citep{gushchin2023entropic} We use the code provided by the authors. In our experiments, we use exactly the same hyperparameters for this setup as the authors \citep[Appendix B]{gushchin2023entropic}, except the number of discretization steps $N$, which we set to $200$ as well as for other Schrödinger Bridge based methods.

\vspace{-2mm}
\section{Extended Discussion of Limitations}\label{app:limitations}\vspace{-2mm}

In general, the main limitation of our approach is the usage of MCMC. This procedure is time-consuming and requires adjusting several hyperparameters. Moreover, in practice, it may not always converge to the desired distribution which introduces additional biases. The other details of launching our proposed algorithm arise due to its connection to EBM's learning procedure. It is known that EBMs for generative modelling could be trained by two different optimization regimes: short-run (non-convergent) training and long-run (convergent) training \citep{nijkamp2020anatomy}. In the first regime, the learned potential does not necessarily represent the energy function of the learned distribution. Because of this, the short-run mode may not always be adapted for Energy-guided EOT, since it seems crucial for $f_\theta$ to represent the true component of the conditional Energy potentials $E_{\mu_x^{f_\theta}}(y) = \frac{c(x, y) - f_{\theta}(y)}{\varepsilon}$. In particular, for our \textit{Colored MNIST} experiment, we found the short-run regime to be unstable and utilize exclusively long-run mode. At the same time, for moderate-dimensional \textit{Toy 2D} and \textit{Gaussian-to-Gaussian} experiments as well as for latent-space \textit{high-dimensional I2I setup}, non-convergent training was successful.

\end{document}